\documentclass{article}

\usepackage{microtype}
\usepackage{graphicx}
\usepackage{bm}
\usepackage{subcaption}
\usepackage{booktabs} 

\usepackage{hyperref}

\usepackage[preprint]{icml2026}

\usepackage{amsmath}
\usepackage{amssymb}
\usepackage{mathtools}
\usepackage{amsthm}
\usepackage{array}
\usepackage{tabularx}
\usepackage[capitalize,noabbrev]{cleveref}
\usepackage{amsfonts}
\usepackage{multirow} 
\usepackage{colortbl}
\usepackage{wrapfig}

\usepackage[most]{tcolorbox} 
\newtcolorbox[list inside=prompt,auto counter]{prompt}[1][]{
    colbacktitle=black!60,
    coltitle=white,
    fontupper=\footnotesize,
    boxsep=5pt,
    left=0pt,
    right=0pt,
    top=0pt,
    bottom=0pt,
    boxrule=1pt,
    #1,
}

\theoremstyle{plain}
\newtheorem{theorem}{Theorem}[section]

\newtheorem{corollary}[theorem]{Corollary}
\theoremstyle{definition}

\newtheorem{assumption}[theorem]{Assumption}
\theoremstyle{remark}

\newcolumntype{Y}{>{\centering\arraybackslash}X}

\usepackage[textsize=tiny]{todonotes}

\NewDocumentCommand{\yafu}
{ mO{} }{\textcolor{blue}{\textsuperscript{\textit{yafu}}\textsf{\textbf{\small[#1]}}}}

\icmltitlerunning{FaithRL: Learning to Reason Faithfully through Step-Level Faithfulness Maximization}

\begin{document}

\twocolumn[

\icmltitle{FaithRL: Learning to Reason Faithfully through Step-Level Faithfulness Maximization}




  \begin{icmlauthorlist}
    \icmlauthor{Runquan Gui}{ustc,shlab}
    \icmlauthor{Yafu Li}{shlab}
    \icmlauthor{Xiaoye Qu}{shlab}
    \icmlauthor{Ziyan Liu}{ustc}
    \icmlauthor{Yeqiu Cheng}{ustc}
    \icmlauthor{Yu Cheng}{cuhk}
\end{icmlauthorlist}

\icmlaffiliation{shlab}{Shanghai AI Laboratory}
\icmlaffiliation{ustc}{University of Science and Technology of China}
\icmlaffiliation{cuhk}{The Chinese University of Hong Kong}

\icmlcorrespondingauthor{Yafu Li}{yafuly@gmail.com}
  \icmlkeywords{Machine Learning, ICML}

  \vskip 0.3in
]




\printAffiliationsAndNotice{Work was done during Runquan Gui's internship at Shanghai AI Laboratory. Yafu Li is the Project Lead.}

\begin{abstract}
Reinforcement Learning with Verifiable Rewards (RLVR) has markedly improved the performance of Large Language Models (LLMs) on tasks requiring multi-step reasoning.
However, most RLVR pipelines rely on sparse outcome-based rewards, providing little supervision over intermediate steps and thus encouraging over-confidence and spurious reasoning, which in turn increases hallucinations.
To address this, we propose \textbf{FaithRL}, a general reinforcement learning framework that directly optimizes reasoning faithfulness.
We formalize a faithfulness-maximization objective and theoretically show that optimizing it mitigates over-confidence.
To instantiate this objective, we introduce a geometric reward design and a faithfulness-aware advantage modulation mechanism that assigns step-level credit by penalizing unsupported steps while preserving valid partial derivations.
Across diverse backbones and benchmarks, FaithRL consistently reduces hallucination rates while maintaining (and often improving) answer correctness.
Further analysis confirms that FaithRL increases step-wise reasoning faithfulness and generalizes robustly.
Our code is available at \url{https://github.com/aintdoin/FaithRL}.
\end{abstract}

\section{Introduction}
Reinforcement Learning with Verifiable Rewards (RLVR) has substantially advanced LLM reasoning capabilities in complex domains such as mathematics and coding~\cite{guo2025deepseek}.
However, current paradigms relying on sparse outcome-based rewards often neglect intermediate process validity.
Since successfully guessing the answer yields a positive reward while explicit refusal is often penalized, models are incentivized toward \textit{over-confidence}, favoring speculative answering rather than acknowledging uncertainty~\cite{chen2025dont, openai2025hallucination}.
Furthermore, the scalar nature of the reward obscures step-level fidelity, reinforcing erroneous steps that coincidentally yield correct answers while penalizing valid partial derivations~\cite{agarwal2024faithfulness, huang2025survey}.
Consequently, restoring faithfulness necessitates that models act honestly by refusing to predict when evidentiary support is insufficient, and logically by ensuring every reasoning step contributes validly to the final result~\cite{wang2025comprehensive,chen2025learning}.

\begin{figure}[t]
\begin{center}
\centerline{\includegraphics[width=0.48\textwidth]{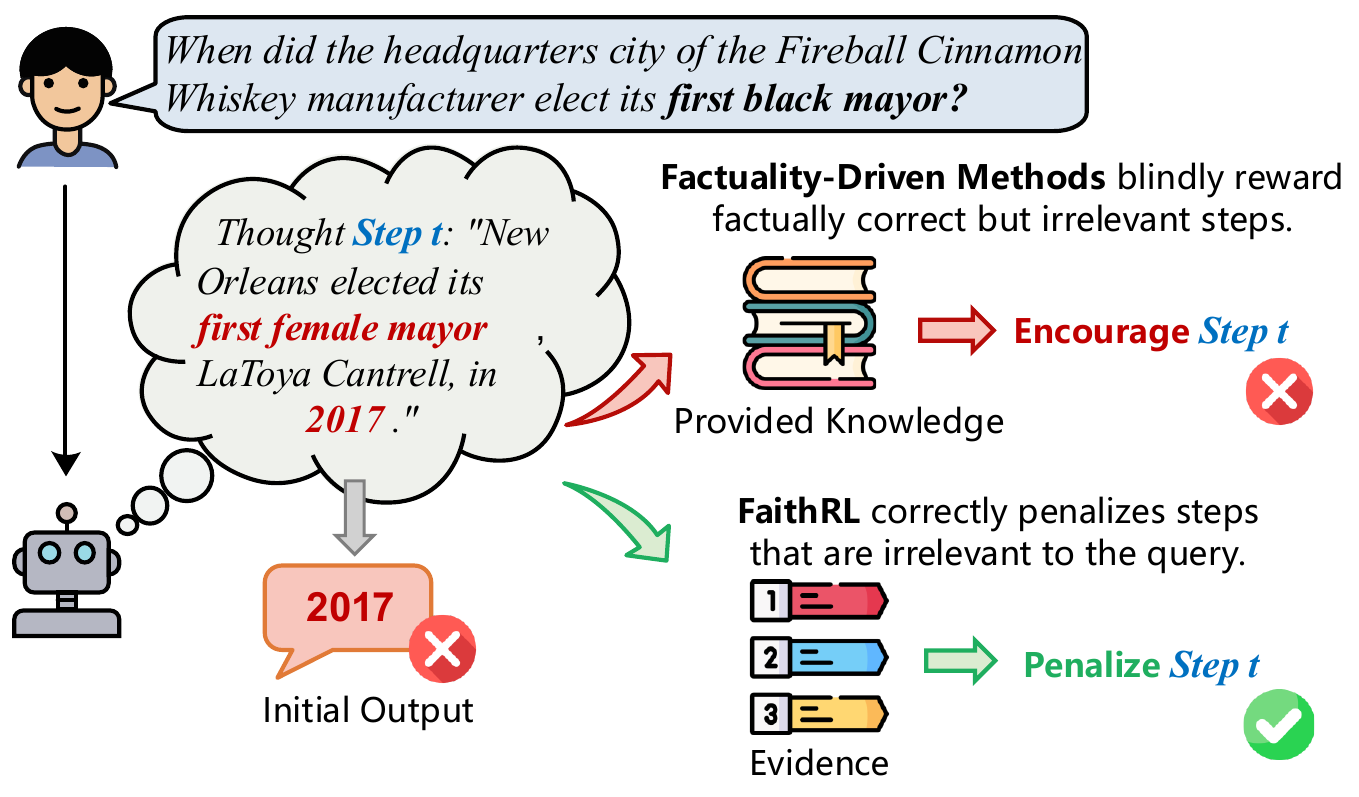}}
\caption{Comparison between factuality-driven methods and FaithRL. Unlike factuality-driven methods that induce hallucinations by rewarding correct but unfaithful steps, FaithRL penalizes such irrelevance to ensure a faithful and correct answer.}
\label{Figure1}
\end{center}
\vspace{-30pt}
\end{figure}

Recently, numerous approaches have been proposed to improve model faithfulness.
A prominent line of work constructs refusal datasets by treating consistent model failures as unanswerable instances and supervising models to generate explicit refusals~\cite{zhang2024r}.
However, tying refusal behavior to early-stage failures ignores the fact that model capabilities evolve during training.
As a result, such methods can bias models toward \textit{over-conservativeness}, degrading reasoning performance while inflating refusal rates.
Complementary efforts focus on verifying intermediate reasoning steps, typically using factuality-driven metrics~\cite{ren2025knowrl, lireasoning}.
While effective at enforcing factual correctness, these approaches often struggle to guarantee logical soundness: as illustrated in Figure~\ref{Figure1}, models may produce chains composed of factually correct but irrelevant statements, failing to form a valid deductive path.

To address these challenges, we propose \underline{Faith}fulness-aware \underline{R}einforcement \underline{L}earning (\textbf{FaithRL}), a general reinforcement learning (RL) framework that promotes faithful reasoning. 
Instead of rewarding only the final answer, it encourages trajectories whose intermediate steps are supported by the required evidence and discourages unsupported steps even when they accidentally lead to a correct outcome.
Concretely, we (i) formalize \textit{maximizing reasoning faithfulness} as an objective and show that it avoids degenerate behaviors such as over-confidence and over-conservativeness; (ii) design a \textit{geometric reward} that balances answering and refusal according to baseline capability, preventing both reward-hacking via guessing and blanket refusal; and (iii) introduce \textit{faithfulness-aware advantage modulation}, which assigns step-level credit by filtering positive updates to faithful steps and concentrating penalties on unfaithful ones.

FaithRL yields consistent gains across benchmarks.
Compared with prior RLVR methods, it reduces hallucination rates by an average of \textbf{4.7} points while improving correctness by \textbf{1.6} points on the \textit{full} variants of 2WikiMultiHopQA~\cite{ho2020constructing}, HotpotQA~\cite{yang2018hotpotqa}, and MuSiQue~\cite{trivedi2022musique}.
It also generalizes strongly: on out-of-distribution tasks such as MATH500~\cite{hendrycks2021measuring} and GSM8k~\cite{cobbe2021training}, FaithRL improves accuracy by over 10.8 points on average and reduces hallucination rates by 8.4 points.
Finally, FaithRL steadily increases the faithful reasoning ratio during training, improving the proportion of faithful steps by 31.1\% in-domain and 6.5\% out-of-distribution.

\begin{itemize}
    \item We propose the novel optimization objective of \textit{maximizing reasoning faithfulness}, theoretically proving that it prevents the model from lapsing into over-confidence or over-conservativeness.
    \item We introduce FaithRL, a unified RL framework designed to strictly align model incentives with this objective. To the best of our knowledge, we are the first work to propose mitigating hallucinations by directly optimizing reasoning faithfulness via objective evidence verification, laying a foundation for future research.
    \item FaithRL consistently outperforms strong baselines across three tasks, achieving an average \textbf{4.7} points reduction in hallucination and a \textbf{1.6} points gain in correctness over RLVR methods, yielding a \textbf{16.0} points improvement in the truthful helpfulness score defined in Section~\ref{subsec:faithrl}.
\end{itemize}

\section{Related Work}\label{sec:related_work}
\paragraph{Hallucination Mitigation}
Mitigating hallucinations necessitates a delicate balance between maximizing response correctness and ensuring effective refusal of unanswerable queries.
Refusal-Aware Instruction Tuning~(RAIT) explicitly trains models to generate specific refusal indicators for unknown questions, such as outputting "I don't know"~\cite{cheng2024can, yang2024alignment} or appending uncertainty markers~\cite{zhang2024r}.
Although advanced methods like CRaFT~\cite{zhu2025utilize} attempt to alleviate over-refusal by concurrently fine-tuning on high-certainty and unsolvable samples, these supervised approaches may inadvertently constrain general reasoning capabilities.
Confidence-based strategies leverage model uncertainty to determine abstention decisions at inference time~\cite{jurayj2025your}.
Implementing these strategies effectively necessitates precise calibration techniques~\cite{sun2025detection, wang2025joint, damani2025beyond}, and the reliance on fixed thresholds can be sensitive to specific task distributions, potentially affecting generalization.
Recently, RLVR methods have explored incentivizing refusal by penalizing hallucinations through multi-dimensional~\cite{chen2025learning} or multi-level rewards~\cite{wei2025truthrl}; nevertheless, such coarse-grained outcome rewards might provide limited guidance for optimizing the fine-grained intermediate reasoning process.

\vspace{-5pt}

\paragraph{Reasoning Process Verification}
In multi-hop QA, LLMs optimized via outcome-driven RL often generate reasoning chains that are irrelevant to the target objective.
Existing literature frequently attributes this faithfulness issue primarily to factual errors~\cite{huang2025improving, agarwal2024faithfulness, chen2025reasoning}.
To address this, prevalent verification methods decompose responses into independent atomic claims for retrieval-based fact-checking, as exemplified by HalluMeasure~\cite{akbar2024hallumeasure}, KnowHalu~\cite{zhang2024knowhalu}, and others~\cite{paul2024making, ren2025knowrl, lireasoning}.
While effective at mitigating hallucinations arising from factual inaccuracies, these claim-centric strategies may be less effective in addressing invalid logical processes.

\section{Preliminary}

\subsection{Problem Formulation}
\label{subsec:problem_formulation}
We define the multi-hop QA task over a universal knowledge base $\mathcal{S}$, a set of questions $\mathcal{Q}$, and an answer space $\mathcal{A}$. For each query $q \in \mathcal{Q}$, we designate a specific context $\mathcal{K}(q) \subset \mathcal{S}$, which consists of a set of documents associated with the query. An LLM $\mathcal{M}$ takes the pair $(q, \mathcal{K}(q))$ as input to generate a sequence of tokens $\tau = (t_1, \dots, t_L)$. To facilitate step-level analysis, we structurally decompose the trajectory into discrete reasoning steps and a final answer, denoted as $\tau = (s_1, \dots, s_k, a)$, where $s_{1:k}$ represents the reasoning process. Let $A^* \in \mathcal{A}$ be the ground truth.

We postulate that $\mathcal{S}$ is informationally complete:

\begin{assumption}[\textbf{Knowledge Completeness}]
\label{asm:completeness}
For any $q \in \mathcal{Q}$, $\mathcal{S}$ contains sufficient information to derive the correct answer $A^*$.
\end{assumption}

Based on this assumption, we define the evidence set $\mathcal{E}(q)$ as the minimal subset of $\mathcal{S}$ required to answer $q$. In standard multi-hop reasoning tasks, the reasoning chain is universally designed to be unique, thereby ensuring the uniqueness of $\mathcal{E}(q)$. From this definition, we derive the following corollary regarding the answerability of a question:

\begin{corollary}[\textbf{Answerability Partition}]
Based on the evidence availability, the query set $\mathcal{Q}$ is partitioned into two disjoint subsets: the Answerable Set $\mathcal{Q}_{ans}$ and the Unanswerable Set $\mathcal{Q}_{unans}$.
\begin{align}
    \mathcal{Q}_{ans} &= \{ q \in \mathcal{Q} \mid \mathcal{E}(q) \subseteq \mathcal{K}(q) \} \\
    \mathcal{Q}_{unans} &= \mathcal{Q} \setminus \mathcal{Q}_{ans}
\end{align}
\end{corollary}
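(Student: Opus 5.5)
The plan is to reduce the claim to the fact that $\mathcal{E}(q)$ is well defined for every $q \in \mathcal{Q}$, and then check that the two displayed sets form a partition of $\mathcal{Q}$.

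\emph{Step 1: $\mathcal{E}(q)$ exists.} Fix $q \in \mathcal{Q}$. By Assumption~\ref{asm:completeness}, $\mathcal{S}$ contains enough information to derive $A^*$. So the family of subsets of $\mathcal{S}$ that suffice to answer $q$ is nonempty, since it contains $\mathcal{S}$ itself. Sufficiency is monotone under enlargement, so this family has minimal elements, for example by taking a smallest sufficient subset. In the infinite case one can instead note that any derivation uses only finitely many documents, which again yields a finite sufficient subset to minimize over.

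\emph{Step 2: $\mathcal{E}(q)$ is unique.} Minimal sufficient sets need not be unique in general. Here I will invoke the paper's standing modelling premise that each multi-hop question has a unique reasoning chain, which makes the minimal set unique. With that premise, $q \mapsto \mathcal{E}(q)$ is a well-defined function on $\mathcal{Q}$. I expect this step to be the main obstacle: it is the only place where the argument depends on an informal stipulation rather than a derivation. I would state the uniqueness premise explicitly as part of the hypotheses rather than try to prove it.

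\emph{Step 3: the partition.} Since $\mathcal{E}(q)$ and $\mathcal{K}(q)$ are fixed sets, the predicate $\mathcal{E}(q) \subseteq \mathcal{K}(q)$ is either true or false for each $q$. Therefore $\mathcal{Q}_{ans}$ is a well-defined subset of $\mathcal{Q}$. The complement $\mathcal{Q}_{unans} = \mathcal{Q} \setminus \mathcal{Q}_{ans}$ then gives $\mathcal{Q}_{ans} \cap \mathcal{Q}_{unans} = \emptyset$ and $\mathcal{Q}_{ans} \cup \mathcal{Q}_{unans} = \mathcal{Q}$ by elementary set algebra.

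Finally, I would add a remark justifying the names of the two sets. If $q \in \mathcal{Q}_{ans}$, the given context contains a sufficient evidence set, so $A^*$ is derivable from $(q, \mathcal{K}(q))$. Conversely, by minimality and uniqueness of $\mathcal{E}(q)$, any context missing part of $\mathcal{E}(q)$ lacks some required evidence, so $q$ is unanswerable from $\mathcal{K}(q)$.
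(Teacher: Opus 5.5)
Your proposal is correct and follows the paper's own implicit argument. The paper likewise obtains $\mathcal{E}(q)$ from Assumption~\ref{asm:completeness} as the minimal sufficient subset of $\mathcal{S}$ and takes its uniqueness from the stipulation that multi-hop reasoning chains are unique; the partition is then immediate because $\mathcal{Q}_{unans}$ is defined as the complement of $\mathcal{Q}_{ans}$. One small wording point: in Step 1 the existence of an inclusion-minimal sufficient set comes from finiteness (your smallest finite sufficient subset argument), not from monotonicity, which you actually need only for the forward direction of your closing remark.
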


For queries identified as unanswerable ($q \in \mathcal{Q}_{unans}$), we explicitly modify the ground truth to $A^* = \text{``IDK''}$ and the evidence set to $\mathcal{E}(q) = \mathcal{E}(q) \cup \{\text{IDK}\}$.

\subsection{Process-Outcome Consistency}

Let $\mathcal{T}$ denote the sample space of all possible reasoning trajectories. We define two operators for any $\tau \in \mathcal{T}$:
\begin{enumerate}
    \item \textbf{Knowledge extraction} $\phi: \mathcal{T} \rightarrow 2^{\mathcal{S}}$ (the power set of $\mathcal{S}$) identifies the set of knowledge items used in $\tau$.
    \item \textbf{Answer Parsing} $\psi: \mathcal{T} \rightarrow \mathcal{A} \cup \{\text{IDK}\}$ extracts the final derived answer.
\end{enumerate}

We further define the set of \textit{hallucinated answers} as $\mathcal{H} = \mathcal{A} \setminus \{A^*, \text{IDK}\}$. Modeling the LLM as a stochastic policy $\pi_\theta$, we propose dual assumptions bridging evidence and answers. 
First, we consider correctness sufficiency. We posit that covering the evidence set guarantees the derivation of the correct answer, regardless of redundant steps:

\begin{assumption}[\textbf{Correctness Sufficiency}]
\label{asm:iff_evidence}
\begin{equation}
    \mathcal{E}(q) \subseteq \phi(\tau) \implies \psi(\tau) = A^*
\end{equation}
\end{assumption}

Conversely, we consider hallucination prevention. We assume that restricting the reasoning trajectory strictly to verified evidence precludes the generation of hallucinations:

\begin{assumption}[\textbf{Hallucination Prevention}]
\label{asm:faithfulness}
\begin{equation}
    \phi(\tau) \subseteq \mathcal{E}(q) \implies \psi(\tau) \notin \mathcal{H}
\end{equation}
\end{assumption}

We acknowledge that Assumptions \ref{asm:iff_evidence} and \ref{asm:faithfulness} represent theoretical idealizations of LLM behavior required to construct a tractable optimization framework. While logical entailment does not strictly guarantee probabilistic generation in all stochastic scenarios, empirical verification on standard benchmarks (detailed in Appendix~\ref{app:assumption_validation}) demonstrates that these assumptions hold with high probability (validity $> 80\%$), confirming their high applicability in practice.

\begin{figure*}[t]
    \centering 
    \includegraphics[width=\textwidth, trim=0 0 0 0, clip]{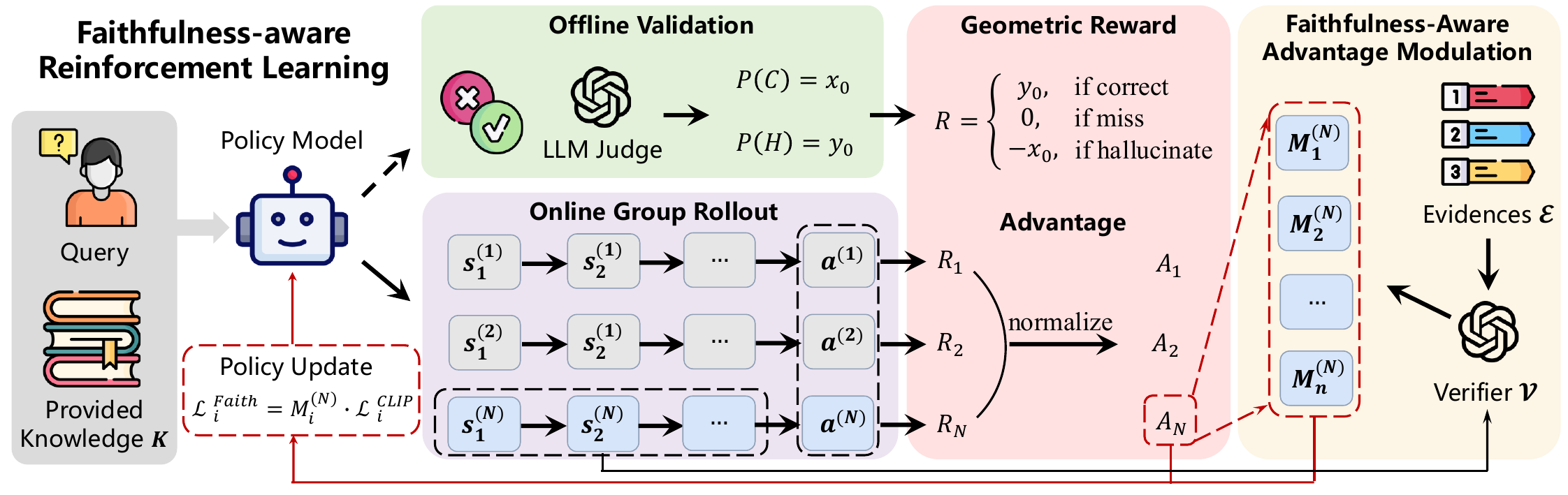}
    \caption{Overall framework of FaithRL. Taking a query and knowledge $\mathcal{K}$, the model first uses offline validation to set baseline rates ($x_0, y_0$). During online group rollout, these rates form a geometric reward based on answer correctness. Simultaneously, the verifier $\mathcal{V}$ checks alignment with evidence $\mathcal{E}$ to derive token-level advantage modulation for the policy update.}
    \label{Figure3}
    \vspace{-10pt}
\end{figure*}

\subsection{Group Relative Policy Optimization (GRPO)}

GRPO~\cite{shao2024deepseekmath} serves as the foundation for our RL framework, characterized by its elimination of the value function in favor of group-based advantage estimation.
For a given query $q$, GRPO samples a group of $N$ outputs $\{\tau_1, \dots, \tau_N\}$ from the old policy $\pi_{\theta_{old}}$.
A reward function $r(\tau_i)$ evaluates the final outcome, typically providing a sparse binary signal:
\begin{equation}
    \label{eq:grpo_reward}
    r(\tau_i) = 
    \begin{cases} 
        1, & \text{if } \psi(\tau_i) = A^* \\ 
        0, & \text{otherwise} 
    \end{cases}
\end{equation}

The advantage $A_i$ for the $i$-th output is computed by normalizing rewards within the group: $A_i = (r(\tau_i) - \mu_r) / \sigma_r$, where $\mu_r$ and $\sigma_r$ denote the mean and standard deviation of the group rewards, respectively.

GRPO adapts the PPO surrogate objective to this group setting. Formally, the optimization objective is defined as the expectation over the token-level clipped surrogate loss:
\begin{equation}
    \label{eq:grpo_objective}
    \mathcal{J}_{GRPO}(\theta) = \mathbb{E}_{q, \{\tau_i\}} \left[ \frac{1}{N} \sum_{i=1}^N \frac{1}{|\tau_i|} \sum_{t=1}^{|\tau_i|} \mathcal{L}^{CLIP}_{i,t}(\theta) \right]
\end{equation}

Here, $\mathcal{L}^{CLIP}_{i,t}(\theta)$ denotes the standard clipped objective term: $\mathcal{L}^{CLIP}_{i,t}(\theta) = \min \left( \rho_{i,t} A_i, \text{clip}(\rho_{i,t}, 1-\epsilon, 1+\epsilon) A_i \right)$,
where $\rho_{i,t} = \dfrac{\pi_\theta(\tau_{i,t} | q, \tau_{i,<t})}{\pi_{\theta_{old}}(\tau_{i,t} | q, \tau_{i,<t})}$ represents the probability ratio between the current and reference policies. The KL divergence regularization term is omitted here for brevity.

\section{Methodology}
In this section, we introduce the FaithRL framework in detail, with the overall process illustrated in Figure~\ref{Figure3}.

\subsection{Optimization Objectives}

We assess model performance by partitioning the trajectory space $\mathcal{T}$ into three distinct outcomes.
The \textit{correct set} $S_{c} = \{ \tau \in \mathcal{T} \mid \psi(\tau) = A^* \}$ includes trajectories where the model provides the correct answer or properly refuses an unanswerable query.
The \textit{miss set} $S_{m} = \{ \tau \in \mathcal{T} \mid \psi(\tau) = \text{IDK} \land A^* \neq \text{IDK} \}$ represents false refusals, occurring only when the model incorrectly declines to answer a valid question.
All other errors fall into the \textit{hallucination Set} $S_{h} = \{ \tau \in \mathcal{T} \mid \psi(\tau) \in \mathcal{H} \}$.
We denote the probabilities of these outcomes for a policy $\pi_\theta$ as correctness $P(C)$, miss rate $P(M)$, and hallucination rate $P(H)$.

Conventional optimization paradigms in QA typically prioritize one of the following objectives:
\begin{itemize}
    \item \textbf{Objective A (maximize correctness):} Maximize $P(C)$. This is the standard objective for general QA~\cite{li2024teaching, huang2025improving}.
    \item \textbf{Objective B (minimize hallucination):} Minimize $P(H)$. This is often prioritized in safety-critical applications~\cite{liu2024hallucination, luo2024zero}.
\end{itemize}

In contrast, we propose \textbf{objective C (maximize reasoning faithfulness)}. This objective targets the \textit{reasoning-faithful set} $S_{f} = \{ \tau \in \mathcal{T} \mid \phi(\tau) = \mathcal{E}(q) \}$, which consists of trajectories where the used knowledge strictly matches the required evidence. Our goal is to maximize $P(F) = P(\tau \in S_{f})$. Based on Assumptions \ref{asm:iff_evidence} and \ref{asm:faithfulness}, this objective naturally unifies accuracy and honesty because strict adherence to evidence guarantees the correct answer while precluding hallucination. This formulation ensures that every step in the reasoning chain makes a necessary contribution to the deduction, rather than merely supporting a correct final guess.

The following theorem demonstrates that while standard objectives push models toward extreme behaviors, optimizing for reasoning faithfulness aligns refusal decisions with model capability.

\begin{theorem}[\textbf{Asymptotic stability of refusal strategy}]
Consider a model with imperfect intrinsic capability. Optimizing the three defined objectives leads to distinct asymptotic behaviors for the miss rate $P(M)$:
\begin{enumerate}
    \item \textbf{Objective A (maximize correctness)} forces $P(M) \to 0$. This results in \textbf{over-confidence}, where the model speculates on all queries to maximize rewards.
    \item \textbf{Objective B (minimize hallucination)} drives $P(M) \to 1$. This leads to \textbf{over-conservativeness}, causing the model to refuse answerable queries to avoid hallucination risks.
    \item \textbf{Objective C (maximize reasoning faithfulness)} ensures \textbf{stability}. It maintains the miss rate at a stable equilibrium determined by the model's intrinsic capability, effectively preventing the degenerate collapse observed in objectives A and B.
\end{enumerate}
\end{theorem}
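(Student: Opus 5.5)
I would prove all three items in one simplified model that separates the refusal decision from the model's intrinsic capability. Fix an answerable query distribution; the unanswerable case is symmetric, since there IDK is the correct answer. Parametrize the policy by its refusal probability $p = P(M)$. Conditional on attempting an answer, the model reaches the correct answer with probability $c \in (0,1)$ and hallucinates with probability $1-c$. Here $c$ is the ``imperfect intrinsic capability'', and it is independent of $p$ because refusal is a gating decision. Then
\begin{equation}
P(C) = (1-p)\,c, \qquad P(H) = (1-p)(1-c), \qquad P(M) = p .
\end{equation}
For Objective A, $P(C)$ is strictly decreasing in $p$ because $c>0$. Gradient ascent on $p$, or any monotone improvement of the policy, therefore drives $p \to 0$, which is over-confidence. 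For Objective B, $P(H)$ is strictly decreasing in $1-p$ because $c<1$. Minimizing it drives $p \to 1$, which is over-conservativeness. Both statements are immediate once the decomposition is in place. Each extreme is the unique optimum, and the corresponding gradient flow $\dot p = \pm c$ or $\pm(1-c)$ has no interior fixed point.

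For Objective C I would use Assumptions~\ref{asm:iff_evidence} and \ref{asm:faithfulness} to show that $P(F)$ is not monotone in $p$. The key modelling step is to split the queries into those the model can ground fully in evidence, with probability mass $\kappa$ reflecting its capability, and those it cannot. On the second group, under the convention that the evidence set is augmented with IDK, an IDK trajectory whose used knowledge matches $\mathcal{E}(q)$ counts as faithful. A faithful trajectory on the first group answers, and it answers correctly by Assumption~\ref{asm:iff_evidence}. On the second group the only trajectory that can satisfy $\phi(\tau) = \mathcal{E}(q)$ is a refusal: by Assumption~\ref{asm:faithfulness} any answer it produced would be unsupported, so $\phi(\tau) \neq \mathcal{E}(q)$. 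Write $p_1$ and $p_2$ for the refusal probabilities on the two groups. Then $P(F) = \kappa(1-p_1) + (1-\kappa)p_2$, which is maximized at $p_1 = 0$, $p_2 = 1$. The aggregate miss rate is therefore $P(M) = 1-\kappa$, an interior value fixed by capability. Under the flow $\dot p_1 = -\kappa$, $\dot p_2 = 1-\kappa$, the dynamics converge to this point, and it is stable because perturbations in either direction are restored.

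The main obstacle is justifying the split into capable and incapable queries, and the claim that refusal on an ungroundable answerable query lies in $S_f$. As literally defined, $S_m \cap S_f = \emptyset$ whenever the whole of $\mathcal{E}(q)$ lies in the context. I would handle this by interpreting $\mathcal{E}(q)$ relative to the model's accessible knowledge: evidence the model cannot extract is effectively absent, so the query behaves as unanswerable for that model. I would state this as the formal meaning of ``imperfect intrinsic capability''. With that interpretation fixed, the three asymptotic claims reduce to the monotonicity computations above.
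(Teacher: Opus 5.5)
Your treatment of Objectives A and B matches the paper's argument and is fine. Under A, a refusal earns $0$ while an attempt earns $c>0$. Under B, a refusal carries zero hallucination risk while an attempt carries $1-c>0$. Your gating decomposition simply makes this quantitative.

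The gap is in Objective C, at the step ``on the second group the only trajectory that can satisfy $\phi(\tau)=\mathcal{E}(q)$ is a refusal.'' Under the paper's definitions this is false. For $q\in\mathcal{Q}_{ans}$ the evidence set is not augmented with IDK, and Assumption~\ref{asm:iff_evidence} gives $S_f\subseteq S_c$, so no refusal on an answerable query is ever faithful. The term $(1-\kappa)p_2$ therefore vanishes: $P(F)=\kappa(1-p_1)$ and $\partial P(F)/\partial p_2=0$. There is no restoring force on $p_2$ and no attracting fixed point at $P(M)=1-\kappa$. Reading $\mathcal{E}(q)$ relative to the model does not repair this. It replaces Objective C with a different objective. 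It also clashes with the paper's convention that IDK enters $\mathcal{E}(q)$ exactly when $A^*$ is reset to IDK. A query that ``behaves as unanswerable'' would then have its refusals in $S_c$, not $S_m$, so they cannot be both faithful and counted in $P(M)$.

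What the literal objective supports, and what the paper's proof actually argues, is weaker. A refusal earns $0$, a faithful attempt earns $1$, and an unsupported (spurious or hallucinated) attempt also earns $0$. So Objective C lowers $P(M)$ only where the model can ground its answer ($\dot p_1=-\kappa$). Unlike Objective A, it gives no incentive to convert refusals into guesses elsewhere ($\dot p_2=0$). The result is $P(M)\to(1-\kappa)\,p_2(0)$: the miss rate is held in place rather than attracted, and that is all the ``stability'' the paper's argument establishes. If you want a genuine interior attractor, it must come from outside $J_C$. One source is the $-x_0$ penalty on $S_h$ in $\mathcal{R}_{geo}$, under which refusing beats attempting iff $c<x_0/(x_0+y_0)$. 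You should state such a source explicitly rather than building it into $\mathcal{E}(q)$.
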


Detailed proofs are provided in Appendix \ref{app:proofs}.

\subsection{Faithfulness-aware Reinforcement Learning}
\label{subsec:faithrl}
We propose FaithRL, a RL framework designed to maximize the faithfulness objective $P(F)$. To achieve this, FaithRL integrates a geometric reward mechanism that aligns outcome-level optimization with the model's intrinsic capability, and a faithfulness-aware advantage modulation strategy that enforces step-level process supervision.

\subsubsection{Geometric Reward Design}

Existing strategies for optimizing reasoning tasks often rely on independent metrics for correctness and hallucination, which may fail to accurately quantify the genuine performance gain after optimization. To simultaneously address helpfulness and honesty without relying on arbitrary hyperparameters, we adopt the \textbf{truthful helpfulness score (THS)}~\cite{zhu2025utilize} as a higher-order metric that provides a geometrically intuitive fusion of utility and honesty.

As seen in Figure~\ref{Figure4}, we represent the model's capability as a point on a two-dimensional plane, where the x-axis denotes the correctness rate and the y-axis represents the hallucination rate. Let $E_0=(x_0, y_0)$ be the initial capability point corresponding to the baseline, $E_1$ be the point after optimization, and $O$ be the origin. The ideal strategy $E^*(1,0)$ denotes 100\% correctness with zero hallucinations. THS is defined as the \textbf{normalized signed area}:
\begin{equation}
    \text{THS} = \frac{\vec{S}_{\triangle OE_{0}E_{1}}}{\vec{S}_{\triangle OE_{0}E^*}}
\end{equation}
This metric effectively captures the degree of model refinement. Crucially, maximizing this signed area aligns perfectly with the dual goals of increasing correctness and reducing hallucinations, providing a robust measure of optimization quality. Detailed mathematical definitions are provided in Appendix~\ref{app:ths_details}.


\begin{figure}[t]
\begin{center}
\centerline{\includegraphics[width=0.35\textwidth]{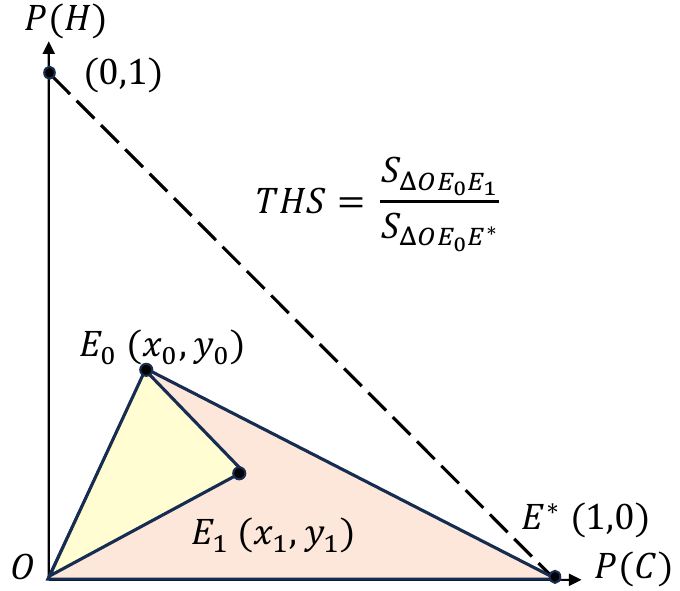}}
\caption{Truthful Helpfulness Score (THS).}
\vspace{-20pt}
\label{Figure4}
\end{center}
\end{figure}

Guided by THS, we derive a novel reward mechanism where the coefficients are not arbitrary hyperparameters but are derived directly from the baseline coordinates $(x_0, y_0)$:
\begin{equation}
\label{eq:geometric_reward}
    \mathcal{R}_{geo}(\tau) = 
    \begin{cases} 
        +y_0, & \text{if } \tau \in S_{c} \\
        0, & \text{if } \tau \in S_{m} \\
        -x_0, & \text{if } \tau \in S_{h} 
    \end{cases}
\end{equation}
This formulation naturally implements an adaptive regulation strategy. For a model with a high hallucination rate, the reward structure assigns a substantial weight to correctness, which prioritizes improving accuracy. Conversely, for a model that is already proficient with high accuracy, the dominant penalty term compels the model to focus on precision, strictly suppressing the risk of hallucination.

Crucially, this design is not merely heuristic but has a geometric interpretation. We prove that optimizing this capability-dependent reward guarantees that the model strictly follows the optimization trajectory of THS.

\begin{theorem}[\textbf{Geometric alignment with THS}]
\label{thm:ths_gradient}
Let $J(\theta) = \mathbb{E}_{\tau \sim \pi_\theta}[\mathcal{R}_{geo}(\tau)]$ be the expected return under the proposed geometric reward. The optimization gradient is strictly proportional to the gradient of THS with a positive scalar coefficient $C$:
\begin{equation}
    \nabla_\theta J(\theta) = C \cdot \nabla_\theta \text{THS}(\pi_\theta), \quad \text{where } C > 0
\end{equation}
This implies that maximizing the geometric reward is mathematically equivalent to maximizing the signed area improvement relative to the baseline capability.
\end{theorem}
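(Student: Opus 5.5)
We write THS as an explicit function of the current capability point and then compare it with $J(\theta)$ directly. The signed area of the triangle $O E_0 E_1$, with $E_0=(x_0,y_0)$ and $E_1=(x_1,y_1)$, is one half of the planar cross product:
\begin{equation*}
\vec{S}_{\triangle OE_0E_1}=\tfrac12\,(x_0 y_1 - y_0 x_1).
\end{equation*}
The orientation is fixed by the convention in Appendix~\ref{app:ths_details}. We take it so that moving toward $E^*=(1,0)$ gives positive area, i.e.\ we use $\tfrac12(y_0x_1-x_0y_1)$. With this sign, the normalizer is $\vec{S}_{\triangle OE_0E^*}=\tfrac12(y_0\cdot 1 - x_0\cdot 0)=\tfrac12 y_0$. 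Hence
\begin{equation*}
\text{THS}(\pi_\theta)=\frac{y_0\,x_1(\theta)-x_0\,y_1(\theta)}{y_0},
\end{equation*}
where $x_1(\theta)=P_{\pi_\theta}(C)$ and $y_1(\theta)=P_{\pi_\theta}(H)$ are the correctness and hallucination rates of the current policy. The baseline point $(x_0,y_0)$ is fixed, computed offline, and does not depend on $\theta$.

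Next we expand the expected return using the partition $\mathcal{T}=S_c\sqcup S_m\sqcup S_h$ from the definitions of the three outcome sets. Plugging in the reward values from \eqref{eq:geometric_reward} gives
\begin{equation*}
J(\theta)=y_0\,P_{\pi_\theta}(C)+0\cdot P_{\pi_\theta}(M)-x_0\,P_{\pi_\theta}(H)=y_0x_1(\theta)-x_0y_1(\theta).
\end{equation*}
Comparing with the expression above yields the identity $J(\theta)=y_0\cdot\text{THS}(\pi_\theta)$. Differentiating both sides, with $x_0,y_0$ constant in $\theta$, gives
\begin{equation*}
\nabla_\theta J=y_0\,\nabla_\theta\text{THS},
\end{equation*}
so the constant is $C=y_0$, or $C=y_0/2$ if THS is left as an unnormalized area.

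The remaining step is to show $C>0$, which amounts to $y_0>0$. This is exactly the "imperfect intrinsic capability" hypothesis: the baseline hallucinates with positive probability. If $y_0=0$, THS is undefined anyway, because the normalizing triangle is degenerate. We therefore state this as a standing condition.

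The main obstacle is not the calculus but fixing the sign and orientation convention of the signed area. The statement requires $C>0$, so we must choose the orientation consistent with Appendix~\ref{app:ths_details}, where the ideal point gives THS $=1$. Under that convention, the numerator's sign pattern ($+$ on correctness, $-$ on hallucination) must match the reward's $(+y_0,-x_0)$. We would verify this by checking that $E_1=E^*$ gives THS $=1$ and $J=y_0$, which confirms the ratio $C=y_0$. We would also remark that the identity is exact for $J$ itself, not just for its gradient, so the equivalence of maximizers follows immediately.
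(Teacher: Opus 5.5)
Your proposal is correct and follows essentially the same route as the paper's proof: expand $J(\theta)$ over the partition $S_c \sqcup S_m \sqcup S_h$, recognize $y_0P(C)-x_0P(H)$ as twice the signed area, and divide by the $\theta$-independent normalizer to get $C=2S_{\mathrm{ideal}}=y_0$. Making explicit that $C>0$ requires $y_0>0$ is a worthwhile addition, since the paper simply asserts $S_{\mathrm{ideal}}>0$. Your concern about orientation is unnecessary, because THS is a ratio of two signed areas with the same vertex ordering, so the sign convention cancels.
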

The complete proof is provided in Appendix \ref{app:proof_thm2}.

\subsubsection{Faithfulness-Aware Advantage Modulation}

Geometric rewards optimize the final outcome, but they treat the reasoning process as a black box. To ensure faithfulness, we must verify that every single step contributes validly to the result. This requires supervision at the process level. We introduce \textbf{faithfulness-aware advantage modulation (FAAM)} to strictly enforce this step-wise fidelity.

Adopting the structure from Section~\ref{subsec:problem_formulation}, we decompose the trajectory $\tau$ into discrete reasoning steps $(s_1, \dots, s_k)$ and a final answer. We then employ a verifier $\mathcal{V}$ to assess whether each step $s_j$ is strictly supported by the evidence $\mathcal{E}(q)$. Formally, we define $\mathcal{V}(s_j) = \mathbb{I}[\phi(s_j) \subseteq \mathcal{E}(q)]$, where $1$ signifies a faithful step and $0$ denotes an unfaithful one. This verification is merely an objective check for the utilization of evidence from $\mathcal{E}(q)$ in the step.

We integrate this verification mechanism into the GRPO framework. First, we calculate the advantage $A_i$ for each trajectory using the geometric rewards $\mathcal{R}_{geo}(\tau_i)$ from Equation~\ref{eq:geometric_reward}. To apply supervision at the step level, we introduce a modulation scalar $M_{i,t}$ for each token. Let $\alpha \in [0, 1)$ be the faithfulness coefficient. For any token $t$ within step $s_j$, we define the modulation term as:
\begin{equation}
    M_{i,t} = 
    \begin{cases} 
        (1-\alpha)\mathcal{V}(s_j) + \alpha, & \text{if } A_i > 0 \\
        (1-\alpha)(1 - \mathcal{V}(s_j)) + \alpha, & \text{if } A_i \leq 0
    \end{cases}
\end{equation}

This design functions as a credit assignment filter. For trajectories with a positive advantage ($A_i > 0$), we reinforce valid reasoning: faithful steps get the full reward ($M_{i,t}=1$), and unfaithful steps a discounted reward ($M_{i,t}=\alpha$). For trajectories with a negative advantage, by contrast, unfaithful steps face the full penalty, while faithful steps incur a reduced one. This prevents the model from being unfairly penalized for correct partial reasoning steps, even if the final outcome is wrong.

Building upon the GRPO surrogate objective (Eq.~\ref{eq:grpo_objective}), we incorporate this modulation to define the FaithRL objective:
\begin{equation*}
    \mathcal{J}_{\text{FaithRL}}(\theta) = \mathbb{E}_{q, \{\tau_i\}} \left[ \frac{1}{N} \sum_{i=1}^N \frac{1}{|\tau_i|} \sum_{t=1}^{|\tau_i|} M_{i,t} \cdot \mathcal{L}^{CLIP}_{i,t}(\theta) \right]
\end{equation*}
This objective effectively adjusts the importance of each step during training. By dynamically re-weighting the learning signal, FaithRL ensures that the model is reinforced primarily for reasoning that is both correct and faithful.

\begin{table*}[!t]
\centering
\caption{Main results comparing \textbf{FaithRL} against strong baselines across three multi-hop QA benchmarks. We report \textbf{correct rate (C)} and \textbf{truthful helpfulness score (THS)} where higher is better ($\uparrow$), and \textbf{hallucination rate (H)} where lower is better ($\downarrow$). The best performance is highlighted in \textbf{bold}, and the second best is \underline{underlined}.}
\label{tab:main_results}
\vspace{-0.5em}

\scriptsize 
\setlength{\tabcolsep}{1pt}
\begin{tabularx}{\textwidth}{
    l               
    @{\hskip 4pt}   
    YYY             
    @{\hskip 4pt}   
    YYY             
    @{\hskip 4pt}   
    YYY             
    @{\hskip 4pt}   
    YYY             
}
\toprule
& \multicolumn{3}{c}{\textbf{2WikiMultihopQA-Full}} 
& \multicolumn{3}{c}{\textbf{HotpotQA-Full}} 
& \multicolumn{3}{c}{\textbf{MuSiQue-Full}} 
& \multicolumn{3}{c}{\textbf{Average}} \\ 

\cmidrule(lr){2-4} \cmidrule(lr){5-7} \cmidrule(lr){8-10} \cmidrule(lr){11-13}

\textbf{Method} 
& \textbf{C} ($\uparrow$) & \textbf{H} ($\downarrow$)& \textbf{THS} ($\uparrow$)
& \textbf{C} ($\uparrow$) & \textbf{H} ($\downarrow$)& \textbf{THS} ($\uparrow$)
& \textbf{C} ($\uparrow$) & \textbf{H} ($\downarrow$)& \textbf{THS} ($\uparrow$)
& \textbf{C} ($\uparrow$) & \textbf{H} ($\downarrow$)& \textbf{THS} ($\uparrow$)  \\ 

\midrule

\multicolumn{13}{l}{\textit{\textbf{Llama3.1-8B-Instruct}}} \\
\multicolumn{13}{l}{\textsc{Prompting Baseline}} \\
\quad Prompting     & 69.2 & 24.4 & 0.0 & 63.7 & 33.9 & 0.0 & 54.1 & 32.8 & 0.0 & 62.3 & 30.4 & 0.0 \\

\multicolumn{13}{l}{\textsc{Refusal-Aware SFT}} \\
\quad R-Tuning      & 56.9 & 12.7 & 20.9 & 64.2 & \underline{10.9} & 43.7 & 65.8 & 12.4 & 45.3 & 62.3 & 12.0 & 37.7 \\
\quad CRaFT         & 49.1 & 14.9 & 6.8 & 63.0 & \textbf{10.0} & 44.2 & 59.2 & \textbf{6.2} & 49.0 & 57.1 & \underline{10.4} & 35.8 \\

\multicolumn{13}{l}{\textsc{Confidence-Based Abstention}} \\
\quad CTSF          & 63.5 & 14.8 & 21.5 & 66.6 & 10.8 & 46.3 & 56.8 & 9.0 & 42.0 & 62.3 & 11.5 & 38.7 \\
\quad RLCR          & 60.6 & 35.8 & -40.9 & 59.3 & 40.0 & -15.9 & 50.6 & 41.5 & -17.8 & 56.8 & 39.1 & -23.3 \\
\quad GRPO + CTSF   & 50.6 & 46.7 & -81.8 & 68.0 & 28.3 & 14.8 & 58.8 & 30.7 & 8.2 & 59.1 & 35.2 & -13.0 \\

\multicolumn{13}{l}{\textsc{RLVR Methods}} \\
\quad GRPO          & \underline{86.2} & 12.2 & 51.6 & 76.1 & 22.9 & 33.1 & 79.7 & 13.1 & 58.1 & 80.7 & 16.1 & 47.7 \\
\quad TruthRL       & 85.3 & 11.6 & \underline{52.4} & 78.1 & 18.0 & 44.3 & \textbf{80.1} & 9.7 & \underline{64.1} & \underline{81.2} & 13.1 & \underline{54.4} \\

\multicolumn{13}{l}{\textsc{Factuality-Driven RL}} \\
\quad FSPO          & 77.7 & \underline{9.2} & 51.6 & 78.3 & 14.0 & 52.0 & 63.2 & 12.4 & 42.7 & 73.0 & 11.8 & 48.8 \\
\quad KnowRL        & 78.8 & 19.5 & 23.5 & \underline{81.9} & 16.7 & \underline{50.5} & 75.3 & 16.9 & 47.4 & 78.7 & 17.7 & 42.4 \\
\rowcolor{gray!15} \textbf{FaithRL} (Ours) & \textbf{87.5} & \textbf{9.1} & \textbf{61.7} & \textbf{85.5} & 11.4 & \textbf{64.1} & \underline{80.0} & \underline{8.8} & \textbf{65.5} & \textbf{84.3} & \textbf{9.8} & \textbf{64.2} \\

\midrule

\multicolumn{13}{l}{\textit{\textbf{Qwen2.5-7B-Instruct}}} \\
\multicolumn{13}{l}{\textsc{Prompting Baseline}} \\
\quad Prompting     & 71.5 & 13.1 & 0.0 & 72.7 & 19.3 & 0.0 & 59.2 & 16.3 & 0.0 & 67.8 & 16.2 & 0.0 \\

\multicolumn{13}{l}{\textsc{Refusal-Aware SFT}} \\
\quad R-Tuning      & 59.2 & 21.9 & -60.3 & 64.8 & 15.2 & 7.5 & 65.5 & 13.8 & 27.2 & 63.2 & 16.9 & -7.5 \\
\quad CRaFT         & 51.3 & 10.9 & -8.2 & 64.5 & 10.3 & 25.7 & 58.2 & \textbf{5.3} & 43.5 & 58.0 & \underline{8.8} & 21.2 \\

\multicolumn{13}{l}{\textsc{Confidence-Based Abstention}} \\
\quad CTSF          & 55.7 & 24.1 & -75.8 & 63.3 & 11.4 & 20.4 & 53.8 & 7.5 & 33.0 & 57.6 & 14.3 & -2.2 \\
\quad RLCR          & 64.9 & 27.0 & -82.5 & 67.1 & 29.7 & -44.8 & 52.0 & 29.1 & -53.7 & 61.3 & 28.5 & -58.0 \\
\quad GRPO + CTSF   & 64.8 & 29.2 & -94.6 & 76.1 & 15.3 & 18.5 & 63.5 & 13.2 & 26.9 & 68.1 & 19.2 & -12.3 \\

\multicolumn{13}{l}{\textsc{RLVR Methods}} \\
\quad GRPO          & \underline{84.9} & 13.1 & 13.4 & 84.2 & 14.8 & 28.4 & \textbf{79.4} & 15.3 & 37.0 & \textbf{82.8} & 14.4 & 22.5 \\
\quad TruthRL       & 83.8 & \underline{9.8} & \underline{30.3} & \underline{85.1} & \underline{10.2} & \underline{46.7} & \underline{77.7} & 8.0 & \underline{55.5} & 82.2 & 9.3 & \underline{43.3} \\

\multicolumn{13}{l}{\textsc{Factuality-Driven RL}} \\
\quad FSPO          & 72.4 & 14.9 & -8.9 & 67.2 & 26.1 & -31.1 & 58.6 & 13.9 & 8.1 & 66.0 & 18.3 & -10.6 \\
\quad KnowRL        & 73.8 & 23.9 & -5.7 & 79.7 & 18.9 & 8.5 & 73.8 & 17.6 & 9.9 & 75.8 & 20.1 & -8.3 \\
\rowcolor{gray!15} \textbf{FaithRL} (Ours) & \textbf{85.4} & \textbf{7.9} & \textbf{42.3} & \textbf{86.1} & \textbf{8.0} & \textbf{56.0} & 75.7 & \underline{5.9} & \textbf{59.3} & \underline{82.4} & \textbf{7.3} & \textbf{51.8} \\

\bottomrule
\end{tabularx}
\vspace{-1em}
\end{table*}

We substantiate the effectiveness of this modulation strategy through the following theorem:
\begin{theorem}[\textbf{Optimization Consistency and Bias Rectification}]
\label{thm:gradient_fidelity}
We prove that under the regime of strict faithfulness modulation ($\alpha = 0$), FaithRL achieves \textbf{optimization consistency}, ensuring that the expected policy gradient aligns strictly with the faithful objective $\mathcal{J}_C(\theta)$. Conversely, standard outcomes-based RL (vanilla RL) suffers from \textbf{process-outcome mismatch}, resulting in \textbf{biased optimization} by indiscriminately reinforcing spurious correctness and penalizing faltered reasoning.
\end{theorem}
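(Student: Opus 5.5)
We will work at the level of the unclipped, on-policy REINFORCE-style gradient, which is what the clipped surrogate in $\mathcal{J}_{\text{FaithRL}}$ reduces to at $\rho_{i,t}=1$, and compare it against the gradient of $\mathcal{J}_C(\theta)=P(F)=\mathbb{E}_{\tau\sim\pi_\theta}[\mathbb{I}(\tau\in S_f)]$. First, we decompose the per-token score function along the step structure $\tau=(s_1,\dots,s_k,a)$, so that $\nabla_\theta\log\pi_\theta(\tau)=\sum_j g_j(\tau)+g_a(\tau)$, where $g_j$ collects the tokens of step $s_j$. With $\alpha=0$, the modulation $M_{i,t}$ becomes a hard gate. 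When $A_i>0$ only steps with $\mathcal{V}(s_j)=1$ contribute, and when $A_i\le 0$ only steps with $\mathcal{V}(s_j)=0$ contribute. The expected FaithRL update is therefore
\begin{equation*}
g_{\text{FaithRL}}=\mathbb{E}_\tau\Big[A(\tau)^+\sum_j \mathcal{V}(s_j)g_j(\tau)+A(\tau)^-\sum_j(1-\mathcal{V}(s_j))g_j(\tau)\Big],
\end{equation*}
where $A^+=\max(A,0)$ and $A^-=\min(A,0)$.

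Second, we use Assumptions~\ref{asm:iff_evidence} and \ref{asm:faithfulness} to tie the sign of the advantage to faithfulness. If $\phi(\tau)=\mathcal{E}(q)$, then $\tau\in S_c$ and $\mathcal{R}_{geo}=y_0$, the maximal reward, so $A\ge 0$. We then partition trajectories into four cells: (i) faithful and correct; (ii) correct but containing unfaithful steps, which is spurious correctness; (iii) incorrect but containing faithful partial steps, which is faltered reasoning; and (iv) incorrect and unfaithful. The key step is to show that the gate removes exactly the cross terms. In cell (ii) the unfaithful-step gradients receive weight $0$ rather than $A^+$. In cell (iii) the faithful-step gradients receive weight $0$ rather than $A^-$. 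Every surviving term then pushes probability mass toward steps with $\mathcal{V}=1$ and away from steps with $\mathcal{V}=0$.

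Third, we write $\mathbb{I}(\tau\in S_f)=\prod_j\mathcal{V}(s_j)\cdot\mathbb{I}[\bigcup_j\phi(s_j)\supseteq\mathcal{E}(q)]$ and argue that $\langle g_{\text{FaithRL}},\nabla_\theta\mathcal{J}_C\rangle\ge 0$. The argument is that $\nabla\mathcal{J}_C$ raises the probability of all-faithful prefixes and lowers the probability of any prefix containing an unfaithful step, and the gated update has exactly this sign pattern on every surviving component. This is the sense of \textbf{alignment} we will establish: no gated component is anti-aligned with $\nabla\mathcal{J}_C$, and the gated update is a positively weighted combination of per-step faithfulness-increasing directions.

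For vanilla RL ($M\equiv1$) the same decomposition exhibits the bias explicitly. The expected gradient equals the aligned part plus
\begin{equation*}
\mathbb{E}\Big[A^+\sum_j(1-\mathcal{V}(s_j))g_j\Big]+\mathbb{E}\Big[A^-\sum_j\mathcal{V}(s_j)g_j\Big].
\end{equation*}
The first term is nonzero whenever $P(\text{cell (ii)})>0$ and reinforces unfaithful steps. The second term is nonzero whenever $P(\text{cell (iii)})>0$ and penalizes faithful steps. We will present this as the process-outcome mismatch.

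The main obstacle is the third step. Exact proportionality to $\nabla\mathcal{J}_C$ does not hold in general, because group normalization makes the advantages $A_i$ depend on the batch, and credit inside a step is entangled with later steps. To handle this, we plan to state consistency as a sign-consistency, or positive inner product, result under a step-wise factorization of the policy and with $\mu_r$ treated as a baseline. We will note that the baseline does not alter the expected gradient. Where necessary we will restrict the claim to the idealized regime of Assumptions~\ref{asm:iff_evidence} and \ref{asm:faithfulness}.
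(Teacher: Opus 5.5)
Your step-level bookkeeping is right. The identity you write for vanilla RL (vanilla update $=$ gated update $+\,\mathbb{E}[A^+\sum_j(1-\mathcal{V}(s_j))g_j]+\mathbb{E}[A^-\sum_j\mathcal{V}(s_j)g_j]$) is a sound, finer-grained version of the paper's dual-bias argument. The gap is your third step: deleting the cross terms does not make the remainder ascend $\mathcal{J}_C$. In fact $\langle g_{\text{FaithRL}},\nabla\mathcal{J}_C\rangle\ge 0$ is false even for a tabular per-prefix softmax satisfying both Assumptions.

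Your own factorization of $\mathbb{I}(\tau\in S_f)$ shows why. $\mathcal{V}(s_j)$ only certifies $\phi(s_j)\subseteq\mathcal{E}(q)$, so the gate sees the coverage factor only through the outcome, and the outcome cannot tell a completed chain from a lucky guess. As a result, reinforcing one $\mathcal{V}=1$ step can drain mass from another $\mathcal{V}=1$ step that completes the chain more often.

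Concretely, take equal-length rollouts whose answers are fixed by their steps.
\begin{itemize}
\item The first step is $a$ or $b$, both faithful, with $\pi(a)=\sigma(z)$.
\item After $a$, a faithful step $c$ (probability $p=\sigma(w)$) covers $\mathcal{E}(q)$ and yields $A^*$, so $\tau\in S_f$. Otherwise a faithful step is followed by IDK: a miss with $A<0$, which the gate removes entirely.
\item After $b$, a deterministic faithful but incomplete step is followed by a correct guess, so $r=y_0$ but $\tau\notin S_f$.
\end{itemize}
Let $A_c>0$ be the advantage of correct rollouts. In coordinates $(z,w)$ the expected gated update is $(\pi(a)\pi(b)A_c(p-1),\ \pi(a)A_c\,p(1-p))$, while $\nabla\mathcal{J}_C=(\pi(a)\pi(b)p,\ \pi(a)p(1-p))$. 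Their inner product is $\pi(a)^2A_c\,p(1-p)\,[p(1-p)-\pi(b)^2]$, which is negative for, e.g., $p=1/2$ and $\pi(b)=0.9$. So no sign-pattern argument closes step three, and restricting to a step-wise factorization and the idealized Assumptions does not rescue it.

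Your remark that $\mu_r$ leaves the expected gradient unchanged also fails. Baseline invariance uses $\mathbb{E}[b\,\nabla\log\pi_\theta(\tau)]=0$, but once scores are gated one gets, e.g., $\mathbb{E}[\mathcal{V}(s_j)g_j]=\mathbb{E}_{h_j}[\nabla_\theta\pi_\theta(\mathcal{V}=1\mid h_j)]\neq 0$, with $h_j$ the prefix before $s_j$. Moreover, $A^\pm$ are not linear in $\mu_r$: a miss ($r=0$) is gated as a positive or a negative sample according to the sign of $\mu_r$.

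The paper avoids all of this by arguing at the trajectory level. It treats $\mathcal{V}(\tau)$ as a single bit read as ``evidence covered'', so at $\alpha=0$ the modulation vanishes on all of $S_{spur}\cup S_{fail}$. Weighting by the raw rewards $y_0,0,-x_0$ instead of normalized advantages, the update collapses to $y_0\nabla P(S_f)-x_0\nabla P(S_h)$. Its ``consistency'' means only that the mismatch sets contribute nothing.

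To reach the stated conclusion, adopt that idealization explicitly: identify $\mathcal{V}(\tau)=1$ with $\phi(\tau)=\mathcal{E}(q)$, gate whole trajectories, and use raw rewards. After that, your vanilla decomposition and the rectification are a short computation. The step-level positive-inner-product statement you plan to prove is a more demanding claim, and the example shows it does not hold.
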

The detailed proof is provided in Appendix \ref{app:proof_thm3}.

\section{Experiments}
\subsection{Experimental Setup}
\label{subsec:experimental_setup}

\paragraph{Datasets} We conduct evaluations on three multi-hop reasoning benchmarks: 2WikiMultiHopQA~\cite{ho2020constructing}, HotpotQA~\cite{yang2018hotpotqa}, and MuSiQue-Full~\cite{trivedi2022musique}. MuSiQue-Full naturally includes a mix of answerable and unanswerable questions. In contrast, the 2WikiMultiHopQA and HotpotQA datasets only contain answerable queries. To bridge this gap, we synthesize unanswerable samples for them, resulting in the 2WikiMultiHopQA-Full and HotpotQA-Full datasets.
Specifically, for each question $q$, we remove exactly one evidence document from $\mathcal{K}(q)$, which is the set of discrete documents. This manipulation ensures that the reasoning chain is logically broken, while the remaining knowledge retains high semantic relevance to $q$. This design strictly compels the model to determine answerability through genuine reasoning, effectively preventing it from exploiting superficial heuristics (e.g., keyword matching) or reward hacking to shortcut the judgment process.
We strictly filter these samples to ensure validity (see Appendix~\ref{subsec:detailed_datasets}). We train our model on the MuSiQue-Full dataset and evaluate performance on the test sets of all three \textit{full} benchmarks.


\paragraph{Metrics}
We adopt THS as our primary evaluation metric, as defined in Section~\ref{subsec:faithrl}. Additionally, we report the correctness rate $P(C)$ and hallucination rate $P(H)$ to provide a complete performance profile.

\vspace{-5pt}
\paragraph{Models and baselines}
We evaluate FaithRL against four categories of strong baselines:
\textbf{(1) refusal-aware SFT:} We select R-Tuning~\cite{zhang2024r} and CRaFT~\cite{zhu2025utilize}. These methods explicitly incorporate refusal data into instruction tuning to teach the model when to abstain.
\textbf{(2) confidence-based abstention:} We apply the CTSF thresholding strategy~\cite{jurayj2025your} to three distinct models: the original backbone, a GRPO-optimized model, and an RLCR-calibrated model~\cite{damani2025beyond}.
\textbf{(3) RLVR methods:} We include GRPO and TruthRL~\cite{wei2025truthrl}. These approaches optimize outcomes using sparse rewards, focusing on maximizing correctness and penalizing hallucinations.
\textbf{(4) factuality-driven RL:} We evaluate FSPO~\cite{lireasoning} and KnowRL~\cite{ren2025knowrl}. These methods utilize factuality-centric rewards to guide the reasoning process.

\subsection{Main Results}
Table \ref{tab:main_results} shows the overall experimental results, providing a comprehensive comparison between FaithRL and various baselines.

FaithRL consistently demonstrates superior performance in terms of THS across all settings. Compared with RLVR methods, FaithRL achieves an average improvement of \textbf{+9.0} points on the in-domain MuSiQue-Full dataset and \textbf{+18.5} points on the other two QA datasets. Specifically, FaithRL achieves a remarkable THS of 64.2\% using the Llama-3.1-8B-Instruct backbone, outperforming the strongest baseline, TruthRL (54.4\%), by a margin of 9.8 points. This advantage is particularly pronounced on HotpotQA-Full, where FaithRL leads the next best methods by absolute margins of 13.6 and 9.3 points on the Llama and Qwen backbones, respectively. These outcomes empirically support the efficacy of our geometric reward design in achieving an optimal equilibrium between helpfulness and honesty.

Beyond the aggregate THS metric, FaithRL achieves the highest average correctness while simultaneously maintaining the lowest hallucination rate. Compared with other RLVR methods, FaithRL achieves an average correctness improvement of \textbf{1.6} points and a hallucination reduction of \textbf{4.7} points. For instance, averaging across Llama-3.1-8B-Instruct tasks, FaithRL records 84.3\% correctness and a 9.8\% hallucination rate, surpassing GRPO with a 3.6\% increase in accuracy and a 6.3\% reduction in hallucinations. This Pareto optimality powerfully demonstrates that our method does not merely seek a trade-off between utility and honesty, but rather achieves a fundamental elevation in intrinsic model capability driven by faithful reasoning.

The performance of baseline methods reveals distinct characteristics. While refusal-aware SFT (e.g., CRaFT) and confidence-based abstention (e.g., CTSF) effectively curtail hallucinations compared to prompting baselines, they fail to yield significant gains in correctness. Conversely, RL methods like TruthRL and KnowRL enhance performance on the in-domain MuSiQue-Full dataset but exhibit diminished gains on other QA benchmarks. For example, using the Llama model, TruthRL experiences a THS drop of 11.7 and 19.8 points on 2WikiMultihopQA-Full and HotpotQA-Full respectively compared to in-domain performance. In contrast, FaithRL maintains stability with a negligible drop of 2.6 points. This highlights the robust generalization capabilities of FaithRL across diverse reasoning tasks. 
We provide more detailed experimental results in Appendix~\ref{subsec:detailed_main_results}.

\subsection{FaithRL Enhances Reasoning Faithfulness}
\label{subsec:faithfulness_analysis}

\paragraph{Dynamic Evolution of Reasoning Faithfulness during Training.}

\begin{figure}[t]
\begin{center}
\centerline{\includegraphics[width=0.45\textwidth]{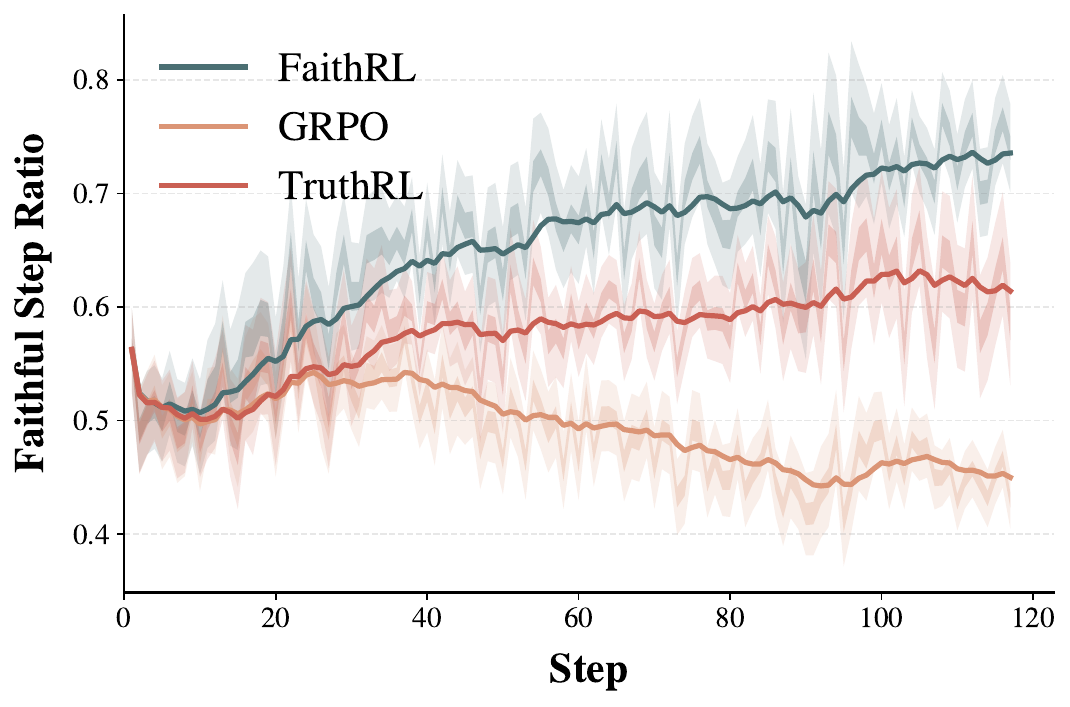}}
\caption{Comparison of the faithful step ratio among FaithRL, TruthRL, and GRPO during training.}
\vspace{-20pt}
\label{Figure5}
\end{center}
\end{figure}

We first investigate the ability of FaithRL to dynamically refine reasoning patterns throughout the optimization process. 
In this experiment, we train the Qwen-2.5-7B-Instruct model on the MuSiQue-Full dataset with 15,000 samples. For every training step, we monitor the reasoning trajectories of each sample and calculate the \textit{faithful step ratio}, defined as the proportion of reasoning steps strictly supported by evidence relative to the total steps.
As illustrated in Figure~\ref{Figure5}, our analysis reveals that FaithRL stands out as the only method capable of driving a significant and sustained increase in reasoning faithfulness. 
Specifically, the overall ratio of faithful steps increased from an initial baseline of 51.4\% to 82.5\% after training. 
In contrast, standard RL baselines, including GRPO and TruthRL, did not show meaningful improvements in faithfulness ratios. 
This disparity highlights the fundamental advantage of our approach. Unlike traditional outcome-oriented optimization, FaithRL provides precise step-level guidance to effectively shape the underlying reasoning process of the model.
We also provide a comprehensive breakdown of the training dynamics, categorized by sample correctness and step faithfulness, in Appendix~\ref{subsec:detailed_training}.

\begin{figure}[t] 
    \centering
    
    \begin{subfigure}[b]{0.49\linewidth} 
        \centering
        \includegraphics[width=\textwidth]{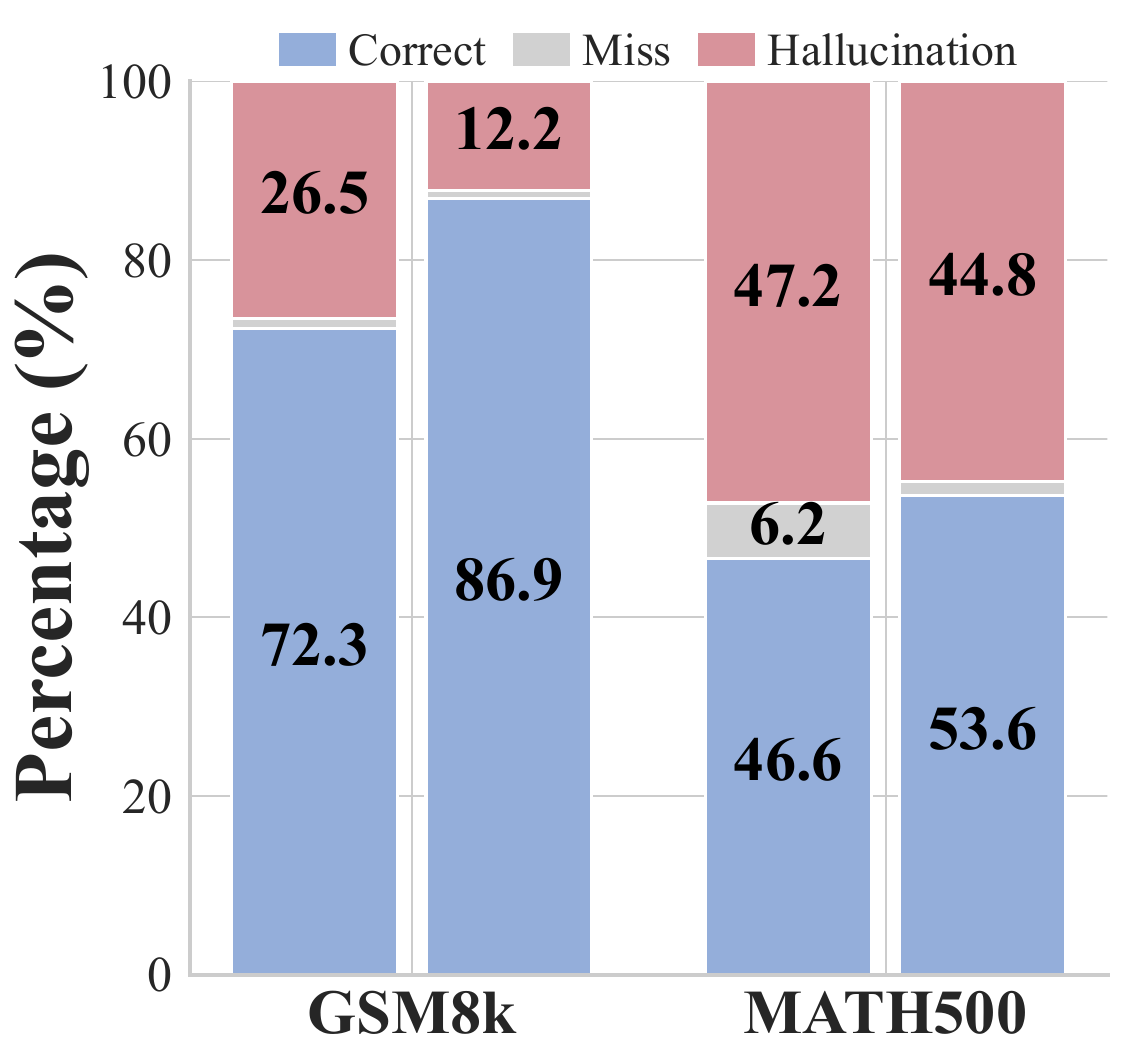}
        \caption{Outcome}
        \label{fig:outcome}
    \end{subfigure}
    \hfill 
    \begin{subfigure}[b]{0.49\linewidth}
        \centering
        \includegraphics[width=\textwidth]{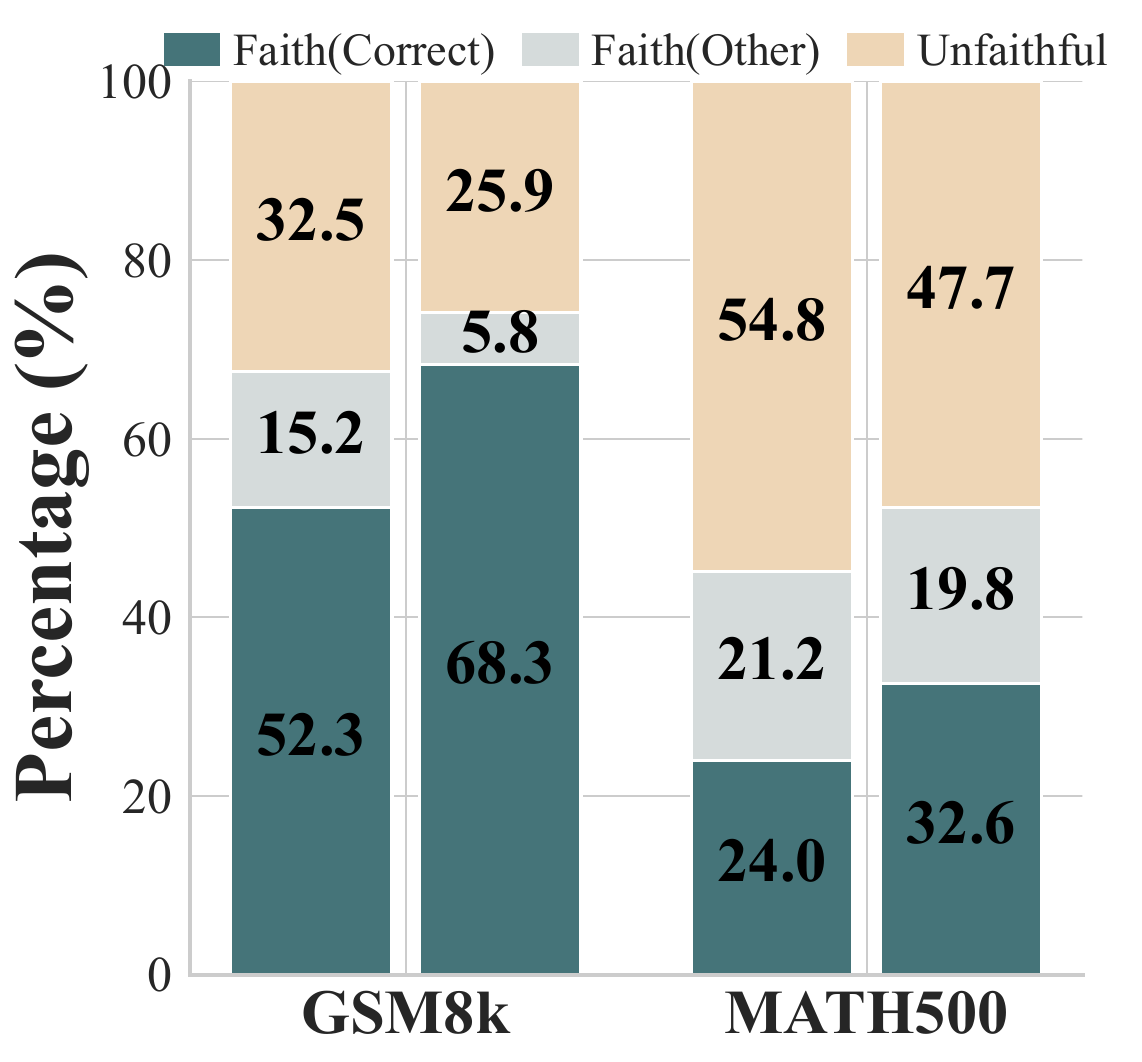}
        \caption{Process}
        \label{fig:process}
    \end{subfigure}
    
    \caption{Model performance on OOD tasks. (a) illustrates the outcome correctness distribution across datasets. (b) displays the faithfulness of the reasoning process. For each dataset, the \textbf{left} bar in each pair represents the GRPO baseline, and the \textbf{right} bar represents FaithRL. \textit{Faithful (Correct)} denotes reasoning steps that are verified as faithful within correctly answered samples.}
    \label{Figure6}
    \vspace{-10pt}
\end{figure}

\vspace{-5pt}
\paragraph{Generalization to out-of-distribution domains.}

To demonstrate that the faithfulness learned by FaithRL is a robust cognitive capability rather than a dataset-specific artifact, we evaluate the models on two Out-of-Distribution (OOD) benchmarks, GSM8k~\cite{cobbe2021training} and MATH500~\cite{hendrycks2021measuring}, comparing FaithRL against the GRPO baseline. To obtain faithfulness metrics, we employ an LLM to perform sentence-wise comparisons against the ground-truth solution processes. These evaluations are strictly inference-only, ensuring that no answer leakage or training signals are provided. Details regarding the experimental setup are provided in Appendix~\ref{app:training_details}.

As shown in Figure~\ref{Figure6}, FaithRL consistently outperforms GRPO. In terms of performance, FaithRL achieves significantly higher accuracy, surpassing GRPO by 14.6\% on GSM8k, while simultaneously maintaining a substantially lower hallucination rate. Regarding faithfulness, FaithRL exhibits a marked improvement in the overall faithful step ratio. Breakdown analysis shows this increase is primarily driven by improved faithfulness within correctly answered samples (rising from 52.3\% to 68.3\% on GSM8k). This suggests that the performance gains stem from genuine reasoning improvements rather than lucky guesses. We provide detailed results in Appendix~\ref{subsec:detailed_ood}.

\begin{figure}[t] 
    \centering
    \scriptsize 
    \begin{minipage}[c]{0.5\linewidth} 
        \centering
        \setlength{\tabcolsep}{2pt}   
        \renewcommand{\arraystretch}{1.2} 
        
        \begin{tabular}{lccc} 
            \toprule
            \textbf{Method} & \textbf{Cor.} & \textbf{Hal.} & \textbf{THS} \\
            \midrule
            Prompting & 67.8 & 16.2 & 0.0 \\
            GRPO & \textbf{82.8} & 14.4 & 22.5 \\ 
            + $\mathcal{R}_{geo}$ & 79.8 & 8.8 & 43.0 \\
            + FAAM & 77.8 & 12.0 & 27.5 \\ 
            \rowcolor{gray!15} \textbf{Ours} & 82.4 & \textbf{7.3} & \textbf{51.8} \\
            \bottomrule
        \end{tabular}
    \end{minipage}%
    \hfill 
    \begin{minipage}[c]{0.5\linewidth}
        \centering
        \includegraphics[width=\linewidth]{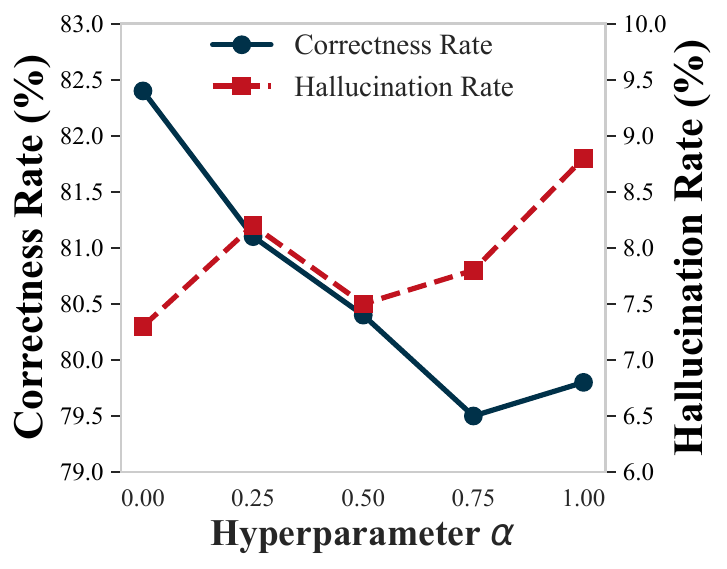} 
    \end{minipage}
    \caption{\textbf{Ablation studies.} (Left) Component analysis confirming the effectiveness of each module compared to baselines. (Right) Sensitivity analysis of $\alpha$, showing optimal performance at $\alpha=0$.}
    \label{fig:ablation_combined}
    \vspace{-20pt}
\end{figure}

\subsection{Ablations}
\label{subsec:ablations}

\paragraph{Component Analysis.} 
We verify the effectiveness of the two core components in FaithRL: the geometric reward ($\mathcal{R}_{geo}$) and the faithfulness-aware advantage modulation (FAAM). As detailed in \textbf{Figure~\ref{fig:ablation_combined} (Left)}, incorporating either $\mathcal{R}_{geo}$ or FAAM individually yields improvements in THS compared to the GRPO baseline. Specifically, adding $\mathcal{R}_{geo}$ alone results in a substantial THS gain of 20.5 points. However, deploying either component in isolation leads to a marked decline in correctness. In contrast, combining both modules enables FaithRL to achieve the lowest hallucination rate while maintaining high correctness, ultimately securing the highest THS. This demonstrates that $\mathcal{R}_{geo}$ and FAAM work synergistically to enforce faithful reasoning. For detailed experimental configurations and specific numerical results, please refer to Appendix~\ref{subsec:detailed_ablation_component}.

\vspace{-5pt}
\paragraph{Hyperparameter Sensitivity.} 
We further evaluate the sensitivity of our method to the hyperparameter $\alpha$ by varying it across $\{0, 0.25, 0.5, 0.75, 1.0\}$. The results, visualized in \textbf{Figure~\ref{fig:ablation_combined} (Right)}, show that decreasing $\alpha$ consistently mitigates hallucinations. In particular, $\alpha=0$ yields the optimal performance profile, effectively minimizing the hallucination rate compared to the vanilla RL baseline ($\alpha=1.0$) while preserving high correctness. We provide detailed results in Appendix~\ref{subsec:detailed_ablation_alpha}.

\vspace{-5pt}
\subsection{Computational Cost}
To quantify the computational overhead introduced by our method, we conducted a comparative analysis of FaithRL against GRPO using the Qwen-2.5-7B-Instruct backbone over one training epoch. We employed FLOPs and GPU hours as primary metrics to ensure a fair comparison of resource consumption across the entire training lifecycle. As shown in Figure~\ref{fig:efficiency}, empirical results indicate that despite FaithRL incorporating step-level supervision, the additional computational cost remains contained within 15\% relative to GRPO. This efficiency is primarily attributed to effective KV-cache reuse and the concurrent processing of verification requests. Detailed experimental setups and a theoretical complexity analysis are provided in Appendix \ref{app:comp_cost}.

\begin{figure}[t]
    \centering
    \begin{subfigure}[b]{0.49\linewidth}
        \centering
        \includegraphics[width=\textwidth]{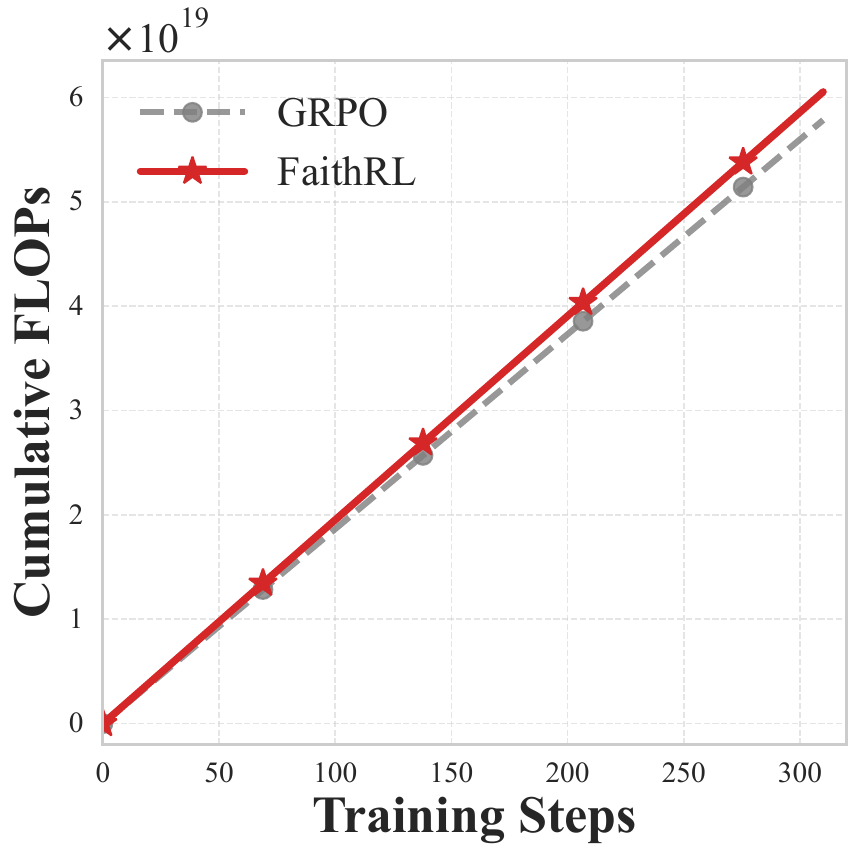}
    \end{subfigure}
    \hfill
    \begin{subfigure}[b]{0.49\linewidth}
        \centering
        \includegraphics[width=\textwidth]{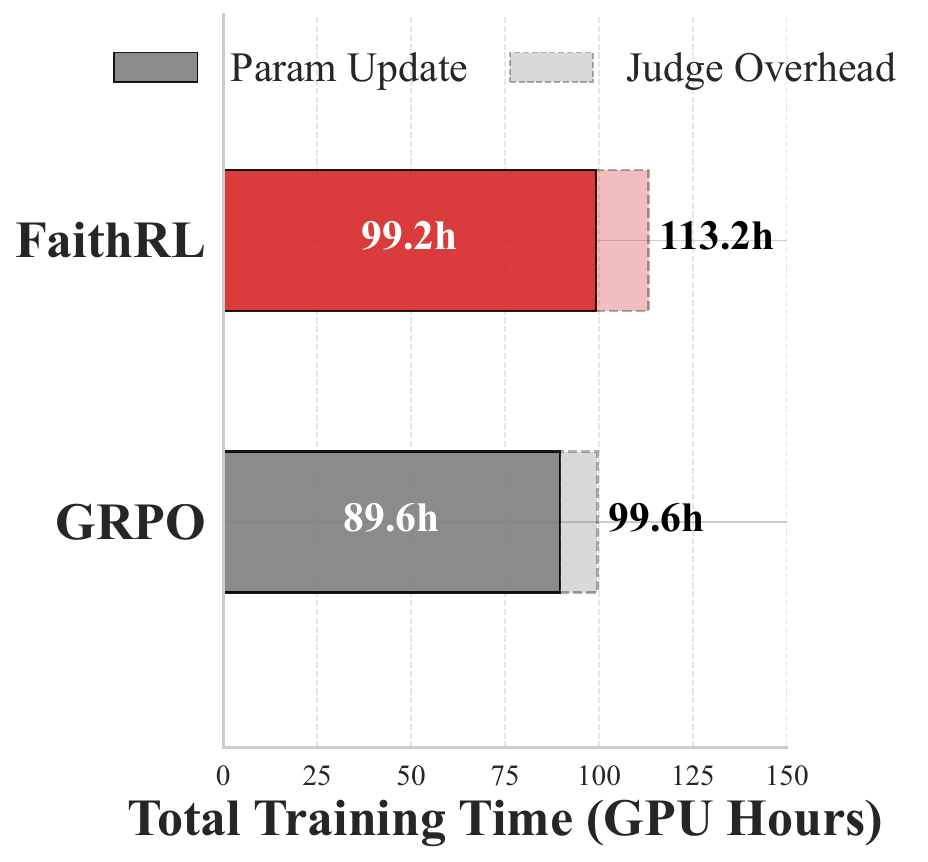}
    \end{subfigure}
    
    \caption{\textbf{Training efficiency analysis.} (Left) Comparison of cumulative FLOPs throughout training, showing that FaithRL incurs only a 4.7\% increase. (Right) Breakdown of total GPU hours, demonstrating that FaithRL limits the overhead to 13.7\%.}

    \label{fig:efficiency}
    \vspace{-15pt}
\end{figure}

\vspace{-5pt}
\section{Conclusion}
We presented FaithRL, a novel RL framework that enhances reasoning faithfulness by step-level verification. 
By strictly aligning model incentives with the validity of the reasoning process, our approach effectively bridges the gap between capability and honesty.
Our extensive experiments confirm that FaithRL not only achieves superior performance in terms of correctness and hallucination reduction but also successfully internalizes faithful reasoning patterns. 

\newpage

\section*{Impact Statement}
This paper presents work aimed at advancing the reliability and faithfulness of Large Language Models (LLMs). By introducing a framework that incentivizes reasoning grounded strictly in provided evidence, our approach addresses critical issues related to hallucinations and over-confidence in AI systems. The potential societal impact is primarily positive, as it fosters the development of more trustworthy AI assistants suitable for deployment in high-stakes domains such as healthcare, law, and education, where factual accuracy is non-negotiable. Furthermore, by promoting transparent and verifiable reasoning processes, this work supports the broader goal of interpretable AI, helping to mitigate the spread of misinformation generated by confident but incorrect models. While the training process incurs a marginal increase in computational resources, the long-term benefits of deploying safer, hallucination-resistant models likely outweigh these costs.

\bibliography{example_paper}
\bibliographystyle{icml2026}

\newpage
\appendix
\onecolumn 

\section{Limitations}
Despite the promising results, we acknowledge two primary limitations. First, FaithRL relies on datasets with process-level annotations, such as evidence chains. While our OOD experiments indicate that the acquired reasoning capabilities are highly transferable, reducing the dependency on granular supervision remains an important direction for future work. Second, the integration of an external verifier introduces additional computational overhead. However, empirical analysis confirms that this cost is acceptable (within 15\%) given the substantial gains in reliability. We leave it to future work to further optimize the verification efficiency.

\section{Experimental Validation of Assumptions}
\label{app:assumption_validation}

To empirically validate the soundness of the theoretical assumptions posited in Section 3, we conduct a rigorous analysis on the MuSiQue-Full dataset ($N=39,876$). To fully capture the dynamic satisfaction of these assumptions throughout the training lifecycle, we evaluate both the initial pre-trained model Qwen-2.5-7B-Instruct and the final \textbf{FaithRL}-optimized model derived from it.

We design two experimental protocols to verify the causal links between evidence sufficiency, reasoning faithfulness, and outcome validity. For \textbf{Assumption \ref{asm:iff_evidence} (Correctness Sufficiency)}, we employ an LLM Judge to assess evidence utilization, isolating trajectories that fully incorporate the complete evidence set $\mathcal{E}(q)$ to measure answer accuracy. For \textbf{Assumption \ref{asm:faithfulness} (Hallucination Prevention)}, we evaluate the logical consistency of reasoning steps against the provided context, filtering for strictly faithful trajectories to measure the hallucination rate. Llama-3.3-70B-Instruct serves as the unified judge. For inference, we set the temperature to 0 and the maximum output length to 4096. All prompt templates used for evaluation are detailed in Appendix~\ref{app:prompt_template}.

The results are summarized in Table \ref{tab:assumption_stats}.

\begin{table}[h]
    \centering
    \caption{Statistical Validation of Theoretical Assumptions on MuSiQue-Full.}
    \label{tab:assumption_stats}
    \begin{tabular}{lcccc}
        \toprule
        \textbf{Model} & \textbf{Assumption} & \textbf{Condition Met} & \textbf{Result Satisfied} & \textbf{Ratio} \\
        \midrule
        \multirow{2}{*}{Qwen-2.5-7B-Inst} & Asm. \ref{asm:iff_evidence} (Correctness) & 10,288 & 7,497 & 72.9\% \\
         & Asm. \ref{asm:faithfulness} (Hallucination) & 14,355 & 12,840 & 89.5\% \\
        \midrule
        \multirow{2}{*}{FaithRL (Ours)} & Asm. \ref{asm:iff_evidence} (Correctness) & 15,392 & 14,036 & 91.2\% \\
         & Asm. \ref{asm:faithfulness} (Hallucination) & 10,208 & 9,809 & 96.1\% \\
        \bottomrule
    \end{tabular}
\end{table}

As demonstrated in Table \ref{tab:assumption_stats}, the empirical results strongly support our theoretical framework. For the pre-trained model, the adherence ratios are 72.9\% and 89.5\%. Post-training with FaithRL, these ratios improve significantly to 91.2\% and 96.1\%, respectively. Averaging these states to approximate the conditions during training, we find a mean Correctness Sufficiency ratio of \textbf{82.0\%} and a mean Hallucination Prevention ratio of \textbf{92.8\%}. These high aggregate metrics confirm that our theoretical assumptions hold statistically significant validity within the domain of knowledge-intensive reasoning.

\section{Proofs of Theorem 1}
\label{app:proofs}
Let $\Pi$ be the space of stochastic policies parameterized by $\theta$. For a given input $q$, $\pi_\theta$ induces a distribution over trajectories $\mathcal{T}$. Consistent with Section 4.1, the sample space is partitioned into disjoint sets: $S_{c}$ (Correct), $S_{m}$ (Miss), $S_{h}$ (Hallucination).
We analyze the asymptotic behavior of the Miss Rate $P(M) = P(\tau \in S_m)$. Note that $S_m$ is only defined for answerable queries ($\mathcal{Q}_{ans}$), as refusal is correct for $\mathcal{Q}_{unans}$.

\subsection{Analysis of Objective A: Maximize Correctness}

\textbf{Proposition:} \textit{Optimizing solely for correctness ($J_A(\theta) = P_\theta(S_c)$) drives the Miss Rate $P(M)$ asymptotically to 0.}

\begin{proof}
Consider the reward structure: $r(\tau) = 1$ if $\tau \in S_{c}$, and $0$ otherwise.
For any answerable query $q \in \mathcal{Q}_{ans}$:
\begin{itemize}
    \item \textbf{Action: Refuse ($\tau \in S_m$).} By definition of Miss, the answer is IDK but $A^* \neq \text{IDK}$. Thus $\tau \notin S_c$. Reward $r=0$.
    \item \textbf{Action: Attempt.} The model generates a candidate answer. Even if the model is uncertain, there exists a non-zero probability that the generated answer matches $A^*$ (either through knowledge or lucky guessing). Thus, the expected reward $E[r|\text{Attempt}] = P(\text{Correct}|\text{Attempt}) > 0$.
\end{itemize}
Since $E[r|\text{Attempt}] > E[r|\text{Refuse}]=0$, the policy gradient consistently pushes the probability mass away from refusal ($S_m$) towards attempting. Asymptotically, $P(M) \to 0$.
\textbf{Consequence:} This creates Over-Confidence. The model learns that "trying" is always strictly better than "missing", encouraging hallucination on difficult or unanswerable queries (where it might also get lucky).
\end{proof}

\subsection{Analysis of Objective B: Minimize Hallucination}

\textbf{Proposition:} \textit{Optimizing solely to minimize hallucination ($J_B(\theta) = -P_\theta(S_h)$) drives the Miss Rate $P(M)$ asymptotically to 1 (Degenerate Solution).}

\begin{proof}
The objective is to minimize $P(H)$, or equivalently maximize likelihood of Non-Hallucination ($S_c \cup S_m$).
For any answerable query $q \in \mathcal{Q}_{ans}$:
\begin{itemize}
    \item \textbf{Action: Refuse ($\tau \in S_m$).} By definition, $S_m$ involves outputting "IDK". Since "IDK" is not in the Hallucination set $\mathcal{H}$, the risk of hallucination is strictly 0.
    \item \textbf{Action: Attempt.} Due to model imperfection, any attempt to answer carries a risk $\epsilon > 0$ of generating an incorrect answer ($\tau \in S_h$).
\end{itemize}
To strictly minimize the hallucination rate to 0, the optimal policy is to choose the action with 0 risk. Refusal ($S_m$) is a "safe haven" that guarantees zero penalty under Objective B.
Thus, the gradient favors shifting probability mass from Attempt to Refuse. Asymptotically, the model learns to refuse all queries to achieve perfect safety, driving $P(M) \to 1$.
\textbf{Consequence:} This creates Over-Conservativeness. Usefulness is sacrificed for safety.
\end{proof}

\subsection{Analysis of Objective C: Maximize Adherence}

\textbf{Proposition:} \textit{Maximizing adherence ($J_C(\theta) = P_\theta(S_f)$) stabilizes the Miss Rate $P(M)$.}

\begin{proof}
Objective C rewards membership in $S_f = \{ \tau \mid \phi(\tau) = \mathcal{E}(q) \}$.
We analyze the gradient direction for answerable queries $q \in \mathcal{Q}_{ans}$ (where $S_m$ exists):

\begin{itemize}
    \item \textbf{Refusal ($S_m$):} If the model refuses, it outputs "IDK". However, for $q \in \mathcal{Q}_{ans}$, the evidence set $\mathcal{E}(q)$ consists of specific knowledge facts required to derive the answer. A refusal trajectory generally fails to utilize/cite these specific facts (or cites them but fails to derive the answer). Thus $\phi(\tau) \neq \mathcal{E}(q)$, so $\tau \notin S_f$. Reward $r=0$.
    \item \textbf{Faithful Attempt ($S_f$):} If the model correctly uses evidence $\mathcal{E}(q)$ to derive $A^*$, then $\tau \in S_f$. Reward $r=1$.
\end{itemize}
For these queries, Objective C behaves like Objective A: it penalizes Misses ($r=0$) and rewards Faithful Attempts ($r=1$). Thus, it exerts a downward pressure on $P(M)$, preventing the Over-Conservativeness of Objective B.

However, unlike Objective A, Objective C does not drive the model to speculative extremes. For queries where the model cannot find evidence (e.g., due to capability limits or unanswerable nature), it cannot satisfy the condition $\phi(\tau) = \mathcal{E}(q)$. In such cases, attempting to answer often leads to $S_{extra}$ (spurious) or $S_h$, which are not rewarded ($r=0$) or penalized.
By strictly tying reward to evidence presence, Objective C ensures that $P(M)$ is reduced \textit{only} when the model is capable of faithful reasoning, thereby stabilizing the refusal strategy at a level consistent with the model's actual capability.
\end{proof}

\section{Metric Details: Truthful Helpfulness Score (THS)}
\label{app:ths_details}

Due to the shortcomings of existing metrics, we employ the Truthful Helpfulness Score (THS). 

\paragraph{Geometric Definition.}
To rigorously ground the definition of THS, we consider the polar characteristics of the performance plane. The angle $\theta$ of a capability vector relative to the positive x-axis, representing the Correctness Rate, inversely correlates with model quality. A smaller angle implies a lower ratio of hallucinations to correct answers. The ray $\vec{OE_0}$ thus establishes an "iso-capability" threshold. Geometrically, the perpendicular distance of a new state $E_1$ from this ray serves as a direct measure of capability enhancement. Since the area of $\triangle OE_0E_1$ is proportional to this altitude given a fixed base $\vec{OE_0}$, the signed area naturally quantifies the magnitude of improvement. Points to the right of the ray yield a positive area, indicating capability gain, while those to the left yield a negative area, signaling degradation.

Operationally, we calculate this signed area using the cross product in 2D space. Let $E_0 = (x_0, y_0)$ and $E_1 = (x_1, y_1)$. The THS is computed as the ratio of the achieved signed area to the maximum potential area:
\begin{equation}
    \text{THS} = \frac{\vec{OE_1} \times \vec{OE_0}}{\vec{OE^*} \times \vec{OE_0}} = \frac{x_1 y_0 - x_0 y_1}{1 \cdot y_0 - x_0 \cdot 0} = \frac{x_1 y_0 - x_0 y_1}{y_0}
\end{equation}
This formulation ensures that the metric is strictly monotonic with respect to the geometric distance from the baseline.

As referenced in Figure~\ref{Figure4}, the area of the triangle $\triangle OE_0E_1$ represents the magnitude of capability improvement. The THS effectively normalizes this area, providing a metric with a theoretical upper bound of 1, where positive values indicate improvement over the baseline.

\paragraph{Effectiveness Analysis.}
To illustrate the necessity of THS, consider the specific example involving two models: Model A with $P_c=0.7, P_h=0.1$ and Model B with $P_c=0.8, P_h=0.2$. While Model B has higher accuracy, its hallucination rate is double that of Model A. By establishing Model A as the baseline $E_0$, Model B yields a THS of -60\%, clearly indicating a negative impact on trustworthiness despite the accuracy gain.

We further validate the metric through controlled variable analysis to demonstrate its robustness. When the Hallucination Rate is held constant, an increase in the Correctness Rate naturally expands the signed area of the capability triangle, resulting in a higher THS. Conversely, when the Correctness Rate is fixed, a reduction in the Hallucination Rate similarly increases the signed area and improves the score. This confirms that maximizing THS is theoretically consistent with the dual goals of enhancing utility and reducing hallucination, without conflict.

For more comprehensive details, please refer to Appendix C.3 of CRaFT~\cite{zhu2025utilize}.

\section{Proof of Theorem 2}
\label{app:proof_thm2}

\begin{proof}
Let the performance of the model be represented in a 2D Cartesian plane where the x-axis denotes the Correctness Rate and the y-axis denotes the Hallucination Rate.
We define the baseline capability vector as $\mathbf{v}_0 = (x_0, y_0)$ and the current policy's performance vector as $\mathbf{v}_\pi = (P_\pi(C), P_\pi(H))$.

\textbf{1. Geometric Interpretation of the Objective}
The Truthful Helpfulness Score (THS) measures the capability gain as the ratio of the signed area swept by the performance vector relative to the baseline. 
The signed area of the triangle formed by the origin $O$, the baseline $\mathbf{v}_0$, and the current state $\mathbf{v}_\pi$ is given by half the magnitude of the 2D cross product:
\begin{equation}
    \text{Area}(O, \mathbf{v}_0, \mathbf{v}_\pi) = \frac{1}{2} (\mathbf{v}_\pi \times \mathbf{v}_0) = \frac{1}{2} (P_\pi(C) \cdot y_0 - P_\pi(H) \cdot x_0)
\end{equation}
By definition, THS is the normalized area relative to the ideal area $S_{ideal}$ (determined by the perfect state $\mathbf{v}^*$):
\begin{equation}
    \text{THS}(\pi_\theta) = \frac{\text{Area}(O, \mathbf{v}_0, \mathbf{v}_\pi)}{S_{ideal}}
\end{equation}
Crucially, $S_{ideal}$ is determined solely by the baseline $\mathbf{v}_0$ and the task ground truth, making it a constant positive scalar with respect to the policy parameters $\theta$.

\textbf{2. Formulation of the Expected Return}
The expected return $J(\theta)$ under our proposed reward scheme $r(\tau)$ is calculated by summing the probability-weighted rewards for each outcome set $S_c, S_m, S_h$:
\begin{align}
    J(\theta) &= \mathbb{E}_{\tau \sim \pi_\theta}[r(\tau)] \\
    &= P_\pi(S_c) \cdot (+y_0) + P_\pi(S_m) \cdot 0 + P_\pi(S_h) \cdot (-x_0) \\
    &= y_0 P_\pi(C) - x_0 P_\pi(H)
\end{align}

\textbf{3. Equivalence and Gradient Alignment}
Comparing the expressions, we observe that the expected return is exactly double the signed area:
\begin{equation}
    J(\theta) = 2 \cdot \text{Area}(O, \mathbf{v}_0, \mathbf{v}_\pi)
\end{equation}
Substituting the definition of THS:
\begin{equation}
    J(\theta) = (2 S_{ideal}) \cdot \text{THS}(\pi_\theta)
\end{equation}
Since $S_{ideal} > 0$, the coefficient $C = 2 S_{ideal}$ is a strictly positive constant. Calculating the gradient with respect to $\theta$ yields:
\begin{equation}
    \nabla_\theta J(\theta) = C \cdot \nabla_\theta \text{THS}(\pi_\theta)
\end{equation}
This establishes a strict positive proportionality (collinearity). It implies that the gradient vector of our reward function points in the exact same direction as the gradient of the Truthful Helpfulness Score, ensuring identical optimization paths.
\end{proof}

\section{Proof of Theorem 3}
\label{app:proof_thm3}

\textbf{Assumption:} We analyze the theoretical properties under the strict faithfulness enforcement regime ($\alpha=0$). Consistent with Theorem 1, we partition the trajectory space $\mathcal{T}$ based on the alignment between reasoning quality ($\mathcal{V}$) and outcome correctness ($r$). We identify two critical subsets of \textit{Process-Outcome Mismatch}:

\begin{itemize}
    \item \textbf{Spurious Set (Lucky Guesses)} $S_{spur} = \{ \tau \mid \psi(\tau)=A^* \land \mathcal{V}(\tau)=0 \}$. The answer is correct, but reasoning is unfaithful.
    \item \textbf{Faltered Set (Incidental Failures)} $S_{fail} = \{ \tau \mid \psi(\tau) \neq A^* \land \mathcal{V}(\tau)=1 \}$. The reasoning is faithful (evidence covered), but the final answer is incorrect (e.g., calculation error or token generation slip).
\end{itemize}
The remaining sets are the \textbf{Faithful Set} $S_f$ (Both correct) and the \textbf{Hallucination Set} $S_h$ (Both wrong).

\paragraph{1. Failure Analysis of Vanilla RL (Dual Bias).}
Consider a standard RL baseline optimizing solely for correctness (Objective A). The expected gradient is approximately:
\begin{equation}
    \nabla \mathcal{J}_{Vanilla} \propto \mathbb{E}_{\tau} [r(\tau) \nabla \log \pi(\tau)]
\end{equation}
This leads to two types of optimization errors:
\begin{enumerate}
    \item \textbf{Positive Bias (Reinforcing Bad Logic):} For $\tau \in S_{spur}$, $r(\tau)=1$. The model is encouraged to repeat flawed reasoning or shortcuts.
    \item \textbf{Negative Bias (Suppressing Good Logic):} For $\tau \in S_{fail}$, $r(\tau) \le 0$ (typically 0 or negative). The model is discouraged from performing the correct reasoning steps simply because the final token was wrong.
\end{enumerate}
Thus, 
\begin{align*}
    \nabla \mathcal{J}_{Vanilla} \propto \nabla P(S_f) + \underbrace{\nabla P(S_{spur})}_{\text{Noise}} - \underbrace{\nabla P(S_{fail})}_{\text{Damage}}.
\end{align*}

\paragraph{2. Dual Rectification in FaithRL.}
FaithRL introduces the modulation term $M(\tau)$ (via FAAM). We analyze how it handles the two mismatch cases under $\alpha = 0$:

\textbf{Case A: Spurious Correctness ($S_{spur}$)}
\begin{itemize}
    \item Outcome is positive ($A > 0$), but Verifier is negative ($\mathcal{V}=0$).
    \item Modulation: $M = (1-0)\cdot 0 + 0 = 0$.
    \item Effect: The reward is filtered out. $\nabla \mathcal{J} \leftarrow 0$. The model \textbf{does not learn} from lucky guesses.
\end{itemize}

\textbf{Case B: Faltered Reasoning ($S_{fail}$)}
\begin{itemize}
    \item Outcome is negative ($A \le 0$), but Verifier is positive ($\mathcal{V}=1$).
    \item Modulation: $M = (1-\alpha)(1-\mathcal{V}) + \alpha$. Here, for negative samples, we want to penalize invalid steps.
    \item Since $\mathcal{V}=1$, $M = (1-0)(0) + 0 = 0$.
    \item Effect: The penalty is filtered out/mitigated. $\nabla \mathcal{J} \leftarrow 0$. The model \textbf{is not punished} for correct reasoning chains that accidentally failed at the end.
\end{itemize}

\paragraph{Conclusion.}
Combining these effects, the FaithRL gradient becomes:
\begin{equation}
    \nabla \mathcal{J}_{FaithRL} \propto y_0 \nabla P(S_f) - x_0 \nabla P(S_{h}) + 0 \cdot \nabla P(S_{spur}) - 0 \cdot \nabla P(S_{fail})
\end{equation}
By effectively zeroing out the contributions from both types of mismatch, FaithRL ensures the optimization is purely driven by the alignment of process and outcome ($S_f$ and $S_h$), proving strict consistency with the faithful objective.

\section{Experimental Details}

\subsection{Datasets}
\label{subsec:detailed_datasets}

\subsubsection{Dataset Overview}
We evaluate our framework on three multi-hop QA benchmarks selected for their distinct complexity profiles. 

\textbf{HotpotQA} is a large-scale multi-hop QA dataset with 112,779 Wikipedia-based question-answer pairs, constructed via crowdsourcing. Each sample includes sentence-level supporting facts to enable explainable reasoning, and all questions require multi-hop inference across multiple documents. It features diverse questions (covering entities, dates, locations, etc.) and a novel type of factoid comparison questions (including yes/no and numerical comparison tasks), without being constrained by pre-existing knowledge bases or schemas, supporting comprehensive evaluation of models’ multi-hop reasoning and explainability.

\textbf{2WikiMultiHopQA} is a large-scale multi-hop QA dataset with 192,606 samples, constructed by integrating unstructured text from Wikipedia and structured triples from Wikidata. Each sample includes explicit evidence in the form of structured triples to illustrate the reasoning path, ensuring all questions require multi-hop inference to answer. It covers four question types (comparison, inference, compositional, bridge comparison) generated via predefined templates and logical rules, facilitating comprehensive evaluation of models’ reasoning capabilities.

\textbf{MuSiQue-Full} is a challenging multi-hop QA dataset with ~25K 2–4 hop questions, constructed via a bottom–up approach by composing single-hop questions from 5 Wikipedia-based datasets. It enforces strict connected reasoning (each step relies on previous outputs) through stringent filters, minimizing train-test leakage and including hard-to-identify distractors. Featuring six reasoning graph structures, it has a 3× larger human–machine gap and lower cheatability (single-hop models drop 30 F1 points) than existing datasets, with natural question decompositions to support interpretable reasoning model development.

\subsubsection{Evidence Construction}
\label{app:evidence_construction}
For the MuSiQue-Full dataset, each sample includes a `question\_decomposition` field representing prior knowledge. We leverage this structured data to construct the target evidence set $\mathcal{E}(q)$. For example, a raw decomposition might appear as:
\begin{verbatim}
[{"id": 42543, "question": "who wrote crazy little thing called love original 
  artist", "answer": "Freddie Mercury"}, 
 {"id": 20093, "question": "In what year did #1 die?", "answer": "1991"]
\end{verbatim}
To transform these pairs into a coherent evidence set, we utilize Llama-3.3-70B-Instruct with the prompt defined in Appendix \ref{app:prompt_template}. The model resolves references (e.g., ``\#1'') and synthesizes declarative statements, resulting in the final $\mathcal{E}(q)$:
\begin{quote}
['Freddie Mercury wrote crazy little thing called love original artist', 'Freddie Mercury died in year 1991.']
\end{quote}

\subsubsection{Dataset Construction for 2WikiMultihopQA-Full and HotpotQA-Full}
\label{app:dataset_construction}
To construct the unanswerable subsets for our Out-of-Domain test sets (2WikiMultiHopQA-Full and HotpotQA-Full), we employ a two-stage process consisting of strategic pruning and rigorous filtering.

\paragraph{Phase 1: Evidence Pruning}
Both datasets provide a \texttt{supporting\_facts} field for each query $q$, formatted as a list of indices (e.g., \texttt{[["Arthur's Magazine", 0], ["First for Women", 0]]}) that point to specific sentences within the context documents. We utilize this ground truth to identify the complete evidence set $\mathcal{E}(q)$ within the retrieved context $\mathcal{K}(q)$. The initial generation of unanswerable samples involves randomly pruning documents referenced by these indices, thereby creating reasoning gaps.

\paragraph{Phase 2: Rigorous Quality Control}
To ensure the constructed samples are genuinely unanswerable due to missing logic rather than trivial irrelevance, and to prevent model shortcuts (reward hacking), we enforce three strict technical constraints during a secondary filtering stage:

\begin{enumerate}
    \item \textbf{Context Density Maintenance:} If the number of remaining evidence documents in $\mathcal{E}(q)$ exceeds 3 after the initial pruning, we iteratively prune an additional document. This ensures that the remaining context $\mathcal{K}(q)$ maintains high semantic density and relevance to the query, preventing the model from identifying unanswerable questions simply by detecting a lack of relevant keywords (a common form of reward hacking).
    
    \item \textbf{Hop-Sensitive Pruning:} We strictly prohibit the pruning of documents corresponding to the first hop of the reasoning chain. Instead, we randomly target intermediate or final hops. This is because first-hop entities are often explicitly mentioned in the question itself, making them easily recoverable or hallucinated. Removing later hops ensures the break in reasoning occurs deep within the logic chain.
    
    \item \textbf{Empirical Solvability Verification:} We employ a verification step using \texttt{Qwen2.5-7B-Instruct} with a \textit{Best-of-32} inference strategy. For every generated unanswerable candidate, we sample 32 reasoning paths. If the model succeeds in deriving the correct answer in \textit{any} of the 32 attempts (indicating the reasoning gap was bypassed via hallucination or leakage), the sample is discarded. We retain only those samples where all 32 attempts fail, ensuring the target output is strictly ``I don't know''.
\end{enumerate}

\subsection{Implementation Details of the Baselines}
\label{app:baselines}
Below, we provide implementation details for the baseline methods:

\begin{itemize}
    \item \textbf{R-Tuning:} We apply R-Tuning~\cite{zhang2024r} on the MuSiQue-Full training set using the backbone model for inference. We append "I am sure" to the model's response for correct reasoning trajectories and "I am unsure" for incorrect ones. The backbone model is then supervised fine-tuned (SFT) on this augmented dataset using full-parameter updates with a cosine learning rate schedule for 3 epochs. During evaluation, responses containing "I am unsure" are mapped to "IDK", while answers are extracted from those marked "I am sure".
    
    \item \textbf{CRaFT:} Following \citet{zhu2025utilize}, we first sample 10 responses for each training query to compute accuracy and certainty scores. We select samples in the top 10\% of both metrics for Rehearsal Training to obtain a rehearsal model. This model is then employed for a two-stage data filtering process, and the final model is obtained by SFT on the filtered dataset.
    
    \item \textbf{CTSF:} We implement the Confidence-Based Thresholding strategy~\cite{jurayj2025your} using the cumulative log-probability of the generated sequence as the confidence metric. We set the refusal threshold to -15; responses falling below this value are converted to "IDK".
    
    \item \textbf{RLCR:} We calibrate the initial model via RL using the reward function $R = 1 - (cr - conf)^2$, where $cr \in \{0, 1\}$ denotes correctness (1 for Correct, 0 for Error/Miss) and $conf \in [0, 1]$ is extracted from a generated \texttt{<confidence>} tag~\cite{damani2025beyond}. Post-training, we use a fixed confidence threshold of 0.5 to determine refusal.
    
    \item \textbf{TruthRL:} We train the model using PPO with the ternary reward scheme proposed by \citet{wei2025truthrl}, assigning rewards of +1 for correct answers, -1 for hallucinations, and 0 for other outcomes.
    
    \item \textbf{FSPO:} We fully reproduce the original implementation by directly adopting the official code of Fine-Grained Step-Level Policy Optimization~\cite{lireasoning}, with only the train batch size adjusted to 128 to enhance the stability and reproducibility of the training process.
    
    \item \textbf{KnowRL:} Following \citet{ren2025knowrl}, we design a composite reward function comprising format, factuality, and correctness components. We set the weighting coefficients for these three terms to a ratio of 1:1:1.
\end{itemize}

\subsection{Training Details}
\label{app:training_details}
\paragraph{Implementation \& Hardware.} We implement our framework using the \texttt{verl} library on a computational node equipped with 4 NVIDIA H200 GPUs (141GB VRAM each). Models undergo full-parameter fine-tuning with a global training batch size of 128 and a validation batch size of 512. Maximum sequence lengths are set to 8192 tokens for prompts and 2048 tokens for responses.

\paragraph{Optimization Hyperparameters.} Optimization utilizes a cosine learning rate scheduler with a peak learning rate of $1 \times 10^{-6}$, a warmup ratio of 0.1, and a minimum learning rate ratio of 0.1. For PPO training, we set the mini-batch size to 64 and the micro-batch size per GPU to 16. The KL divergence penalty is disabled (\texttt{use\_kl\_loss=False}), and the rollout sampling temperature is set to 1.0.

\paragraph{Oracle \& Verification.} To accurately categorize outcomes into Correct ($S_c$), Miss ($S_m$), and Hallucination ($S_h$) as required by our geometric reward, we implement a hierarchical evaluation protocol. We first employ rigorous string matching to detect explicit refusals (e.g., ``I don't know''), classifying such instances as Miss ($S_m$) in answerable queries and Correctness ($S_c$) in unanswerable queries. For non-refusal responses, \texttt{Llama-3.3-70B-Instruct} serves as the oracle judge to distinguish between Correct ($S_c$) and Hallucination ($S_h$). This model also conducts step-wise evidence verification ($\mathcal{V}$), where we segment the reasoning process into steps based on sentence boundaries. We select a strong model for this verification primarily to leverage shared context caching with the outcome judge, thereby minimizing additional overhead (see Section~\ref{app:comp_cost}). Crucially, this task is distinct from reasoning distillation; it constitutes an objective check of evidence attribution rather than a complex evaluation of logical validity, ensuring the task is computationally simpler and more robust than the reasoning process itself. The verifier is deployed on a dedicated server utilizing 2 GPUs with a temperature of 0.0. We present the specific prompt templates in Appendix~\ref{app:prompt_template}.

\paragraph{Evidence for OOD Benchmarks.} To evaluate the out-of-distribution (OOD) generalization of our approach, we conduct inference-only evaluations on GSM8k and MATH500. For these benchmarks, we utilize the ground-truth solution steps as the evidence set solely to verify the generated reasoning chains during the testing phase. This setup ensures a rigorous assessment of how well our verification mechanism, trained on multi-hop QA, generalizes to unseen mathematical reasoning tasks in a zero-shot manner. The specific prompts used for verification are detailed in Appendix~\ref{app:prompt_template}.

\section{Extended Experimental Results}
\label{app:more_results}

\subsection{Additional Results for Main Experiments}
\label{subsec:detailed_main_results}
We present a more granular analysis of Multi-hop QA performance, specifically examining the model's behavior on answerable versus unanswerable subsets of the test data in Table~\ref{tab:detailed_main}.
\begin{table*}[!t]
\centering
\caption{Fine-grained results comparing \textbf{FaithRL} against strong baselines across three multi-hop QA benchmarks. We report \textbf{Accuracy (Acc)} and \textbf{Truthful Helpfulness Score (THS)} where higher is better ($\uparrow$), and \textbf{Error Rate (Err)} where lower is better ($\downarrow$). The performance is reported separately for Answerable (Ans.) and Unanswerable (Unans.) subsets. The best performance is highlighted in \textbf{bold}, and the second best is \underline{underlined}.}
\label{tab:detailed_main}
\vspace{-0.5em}

\scriptsize 
\setlength{\tabcolsep}{1pt} 

\begin{tabularx}{\textwidth}{
    l 
    @{\hskip 4pt} 
    YYYYYY 
    @{\hskip 4pt} 
    YYYYYY 
    @{\hskip 4pt} 
    YYYYYY 
    @{\hskip 4pt} 
    YYY    
}
\toprule
& \multicolumn{6}{c}{\textbf{2WikiMultihopQA-Full}} 
& \multicolumn{6}{c}{\textbf{HotpotQA-Full}} 
& \multicolumn{6}{c}{\textbf{MuSiQue-Full}} 
& \multicolumn{3}{c}{\textbf{Average}} \\ 

\cmidrule(lr){2-7} \cmidrule(lr){8-13} \cmidrule(lr){14-19} \cmidrule(lr){20-22}

& \multicolumn{3}{c}{Ans.} & \multicolumn{3}{c}{Unans.} 
& \multicolumn{3}{c}{Ans.} & \multicolumn{3}{c}{Unans.} 
& \multicolumn{3}{c}{Ans.} & \multicolumn{3}{c}{Unans.} 
& \multirow{2}{*}{\textbf{A}} & \multirow{2}{*}{\textbf{H}} & \multirow{2}{*}{\textbf{T}} \\ 

\cmidrule(lr){2-4} \cmidrule(lr){5-7} 
\cmidrule(lr){8-10} \cmidrule(lr){11-13} 
\cmidrule(lr){14-16} \cmidrule(lr){17-19} 

{\bf Method} 
& \textbf{A} & \textbf{H} & \textbf{T} & \textbf{A} & \textbf{H} & \textbf{T} 
& \textbf{A} & \textbf{H} & \textbf{T} & \textbf{A} & \textbf{H} & \textbf{T} 
& \textbf{A} & \textbf{H} & \textbf{T} & \textbf{A} & \textbf{H} & \textbf{T} 
& & & \\ 

\midrule

\multicolumn{22}{l}{\textit{\textbf{Llama3.1-8B-Instruct}}} \\
\multicolumn{22}{l}{\textsc{Prompting Baseline}} \\
\quad Prompting & 56.6 & 30.4 & 0.0 & 81.7 & 18.3 & 0.0 & 70.5 & 24.6 & 0.0 & 56.8 & 43.2 & 0.0 & 41.6 & 32.3 & 0.0 & 66.6 & 33.4 & 0.0 & 62.3 & 30.4 & 0.0 \\
\multicolumn{22}{l}{\textsc{Refusal-Aware SFT}} \\
\quad R-Tuning  & 18.7 & 20.5 & -19.4 & \textbf{95.1} & \underline{4.9} & \textbf{73.2} & 34.2 & 15.8 & -11.1 & \textbf{94.1} & \textbf{5.9} & \textbf{86.3} & 35.5 & 21.0 & 8.5 & \textbf{96.3} & \textbf{3.7} & \textbf{88.9} & 62.3 & 12.0 & 37.7 \\
\quad CRaFT     & 9.7 & 18.2 & -24.2 & 88.5 & 11.5 & 37.2 & 34.5 & 11.4 & 1.8 & \underline{91.5} & \underline{8.5} & \underline{80.3} & 22.2 & \textbf{8.4} & 11.4 & \underline{96.1} & \underline{3.9} & \underline{88.3} & 57.1 & \underline{10.4} & 35.8 \\
\multicolumn{22}{l}{\textsc{Confidence-Based Abstention}} \\
\quad CTSF      & 37.4 & 19.1 & 1.8 & 89.5 & 10.5 & 42.6 & 43.4 & 11.4 & 10.7 & 89.8 & 10.2 & 76.4 & 20.6 & 10.9 & 6.6 & 93.0 & 7.0 & 79.0 & 62.3 & 11.5 & 38.7 \\
\quad RLCR & 64.4 & 28.3 & 11.7 & 56.7 & 43.3 & -136.6 & 72.3 & 26.3 & -3.1 & 46.3 & 53.7 & -24.3 & 45.1 & 39.0 & -5.1 & 56.1 & 43.9 & -31.4 & 56.8 & 39.1 & -23.3 \\
\quad GRPO + CTSF & 67.6 & 27.0 & 17.3 & 33.6 & 66.4 & -262.8 & 77.0 & 15.5 & 32.5 & 58.9 & 41.1 & 4.9 & 61.9 & 17.1 & 39.9 & 55.7 & 44.3 & -32.6 & 59.1 & 35.2 & -13.0 \\
\multicolumn{22}{l}{\textsc{RLVR Methods}} \\
\quad GRPO      & \textbf{83.5} & 13.3 & \underline{58.7} & 88.8 & 11.2 & 38.8 & \textbf{88.1} & \underline{9.8} & \underline{60.0} & 64.1 & 35.9 & 16.9 & \underline{68.9} & 16.7 & 47.4 & 90.5 & 9.5 & 71.6 & \underline{80.7} & 16.1 & 47.7 \\
\quad TruthRL   & 81.8 & 12.0 & 59.5 & 88.8 & 11.2 & 38.8 & 84.5 & \textbf{7.5} & \textbf{63.0} & 71.6 & 28.4 & 34.3 & \textbf{70.0} & \underline{9.5} & \textbf{57.8} & 90.2 & 9.8 & 70.7 & 81.2 & 13.1 & \underline{54.4}\\
\multicolumn{22}{l}{\textsc{Factuality-Driven RL}} \\
\quad FSPO   & 62.8 & \textbf{10.8} & 42.7 & 92.5 & 7.5 & 59.0 & 73.1 & 11.3 & 40.7 & 83.4 & 16.6 & 61.6 & 36.1 & 15.1 & 16.7 & 90.3 & 9.7 & 71.0 & 73.0 & 11.8 & 48.8 \\
\quad KnowRL   & \underline{82.6} & 13.8 & 56.9 & 74.9 & 25.1 & -37.2 & \underline{87.1} & 10.0 & 58.4 & 76.7 & 23.3 & 46.1 & 67.4 & 17.0 & 45.5 & 83.2 & 16.8 & 49.7 & 78.7 & 17.7 & 42.4 \\
\rowcolor{gray!15} \textbf{FaithRL} (Ours) & 81.7 & \underline{11.6} & \textbf{60.1} & \underline{93.3} & \textbf{6.7} & \underline{63.4} & 83.9 & \underline{9.8} & 55.8 & 87.0 & 13.0 & 69.9 & 65.3 & 12.4 & \underline{49.3} & 94.7 & 5.3 & 84.1 & \textbf{84.3} & \textbf{9.8} & \textbf{64.2} \\

\midrule
\multicolumn{22}{l}{\textit{\textbf{Qwen2.5-7B-Instruct}}} \\
\multicolumn{22}{l}{\textsc{Prompting Baseline}} \\
\quad Prompting & 50.4 & 18.8 & 0.0 & \underline{92.6} & \underline{7.4} & 0.0 & 66.2 & 17.7 & 0.0 & 79.2 & 20.8 & 0.0 & 31.2 & 19.7 & 0.0 & 87.2 & 12.8 & 0.0 & 67.8 & 16.2 & 0.0 \\
\multicolumn{22}{l}{\textsc{Refusal-Aware SFT}} \\
\quad R-Tuning  &26.6 & 35.4 & -68.3 & 91.7 & 8.3 & -12.2 & 36.8 & 23.2 & -50.0 & \textbf{92.8} & \textbf{7.2} & \textbf{65.4} & 34.9 & 23.6 & -2.5 & 96.1 & 3.9 & 69.5 & 63.2 & 16.9 & -7.5 \\
\quad CRaFT     & 9.8 & 14.5 & -29.1 & \textbf{92.7} & \textbf{7.3} & \textbf{1.4} & 38.0 & 11.5 & -5.0 & 90.9 & 9.1 & 56.3 & 19.8 & \underline{7.2} & 8.4 & \underline{96.6} & \underline{3.4} & \underline{73.4} & 58.0 & \underline{8.8} & 21.2 \\
\multicolumn{22}{l}{\textsc{Confidence-Based Abstention}} \\
\quad CTSF      & 43.8 & 15.7 & 1.71 & 67.5 & 32.5 & -339.2 & 39.3 & 10.0 & 1.89 & 87.2 & 12.8 & 38.5 & 15.4 & \textbf{7.1} & 4.16 & 92.1 & 7.9 & 38.3 & 57.6 & 14.3 & -2.2 \\
\quad RLCR & 60.5 & 23.2 & -1.7 & 69.3 & 30.7 & -314.9 & 71.9 & 21.6 & -8.9 & 62.3 & 37.7 & -81.3 & 33.6 & 28.5 & -11.5 & 70.4 & 29.6 & -131.3 & 61.3 & 28.5 & -58.0 \\
\quad GRPO + CTSF & 71.0 & 16.9 & 25.7 & 58.5 & 41.5 & -460.8 & 71.9 & 10.7 & 31.9 & 80.2 & 19.8 & 4.81 & 42.6 & 10.7 & 25.7 & 84.3 & 15.7 & -22.7 & 68.1 & 19.2 & -12.3 \\
\multicolumn{22}{l}{\textsc{RLVR Methods}} \\
\quad GRPO      & \textbf{82.0} & 14.1 & 44.2 & 87.9 & 12.1 & -63.5 & \textbf{86.8} & 11.1 & 45.3 & 81.6 & 18.4 & 11.5 & \textbf{65.9} & 23.5 & 28.7 & 92.9 & 7.1 & 44.5 & \textbf{82.8} & 14.4 & 22.5 \\
\quad TruthRL   & \underline{78.1} & \underline{8.9} & \underline{54.2} & 89.4 & 10.6 & -43.2 & \underline{82.1} & \underline{8.4} & \textbf{50.7} & 88.1 & 11.9 & 42.8 & \underline{59.6} & 11.7 & \textbf{41.1} & 95.7 & 4.3 & 66.4 & 82.2 & 9.3 & \underline{43.3} \\
\multicolumn{22}{l}{\textsc{Factuality-Driven RL}} \\
\quad FSPO   & 56.4 & 18.1 & 7.9 & 88.3 & 11.7 & -58.1 & 70.4 & 16.2 & 9.8 & 64.0 & 36.0 & -73.1 & 31.1 & 13.7 & 9.4 & 86.0 & 14.0 & -9.4 & 66.0 & 18.3 & -10.6 \\
\quad KnowRL   & 70.4 & 24.8 & 3.9 & 77.1 & 22.9 & -209.5 & 82.0 & 15.1 & 25.5 & 77.4 & 22.6 & -8.7 & 56.5 & 26.3 & 14.8 & 91.1 & 8.9 & 30.5 & 75.8 & 20.1 & -8.3\\
\rowcolor{gray!15} \textbf{FaithRL} (Ours) & 78.0 & \textbf{8.6} & \textbf{54.9} & \textbf{92.7} & \textbf{7.3} & \textbf{1.4} & 79.8 & \textbf{8.3} & \underline{48.8} & \underline{92.4} & \underline{7.6} & \underline{63.5} & 54.2 & 9.1 & \underline{39.8} & \textbf{97.2} & \textbf{2.8} & \textbf{78.1} & \underline{82.4} & \textbf{7.3} & \textbf{51.8} \\
\bottomrule
\end{tabularx}
\vspace{-1em}
\end{table*}

\subsection{Extended Analysis of Training Dynamics}
\label{subsec:detailed_training}
We visualize the training dynamics of step-wise faithfulness in Figure~\ref{fig:all_ratios}. We track the evolution of reasoning steps for TruthRL, GRPO, and FaithRL, dividing the metrics into four categories based on the sample polarity (Positive vs. Negative) and step validity (Faithful vs. Unfaithful). As illustrated, FaithRL demonstrates superior reasoning alignment capabilities: it achieves the highest proportion of faithful steps on positive samples (Figure~\ref{fig:all_ratios}a) while successfully suppressing unfaithful reasoning steps on negative samples to the lowest level among all baselines (Figure~\ref{fig:all_ratios}d).

\begin{figure*}[t] 
    \centering
    \begin{subfigure}[b]{0.48\textwidth}
        \centering
        \includegraphics[width=\linewidth]{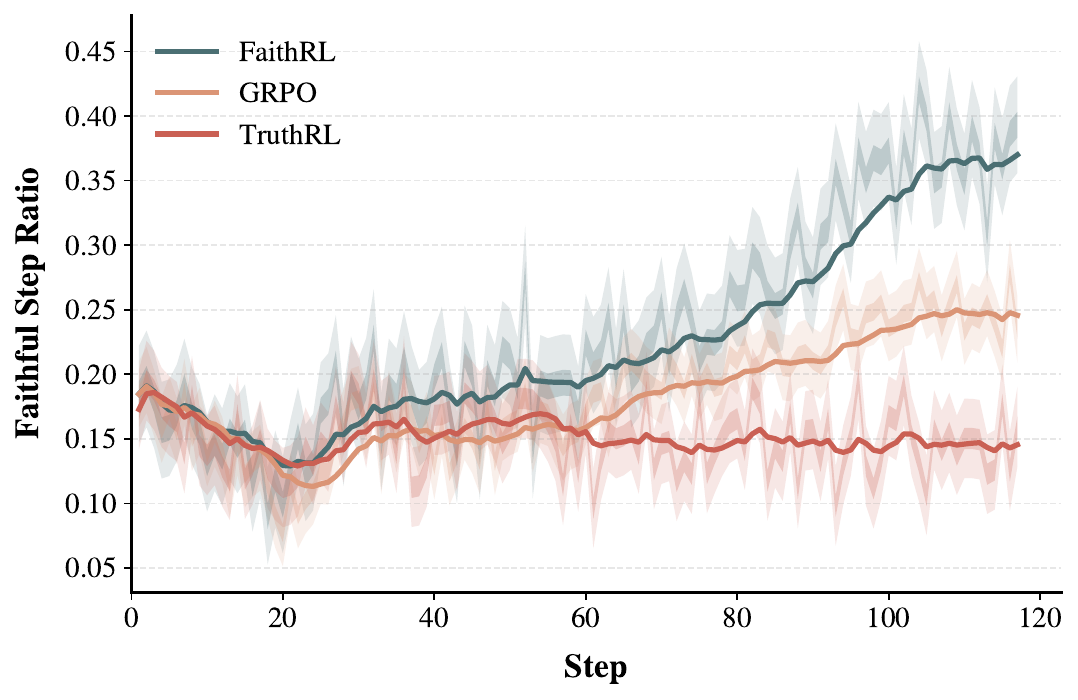}
        \caption{Faithful Step Ratio (Positive)}
        \label{fig:faithful_pos}
    \end{subfigure}
    \hfill 
    \begin{subfigure}[b]{0.48\textwidth}
        \centering
        \includegraphics[width=\linewidth]{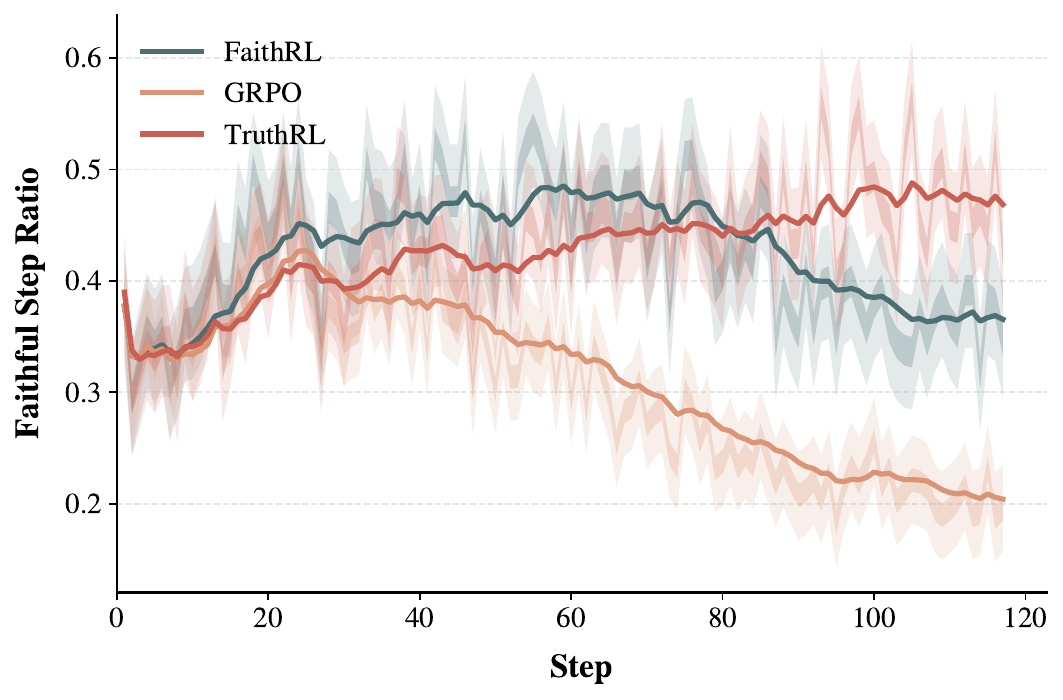}
        \caption{Unfaithful Step Ratio (Positive)}
        \label{fig:unfaithful_pos}
    \end{subfigure}
    

    \begin{subfigure}[b]{0.48\textwidth}
        \centering
        \includegraphics[width=\linewidth]{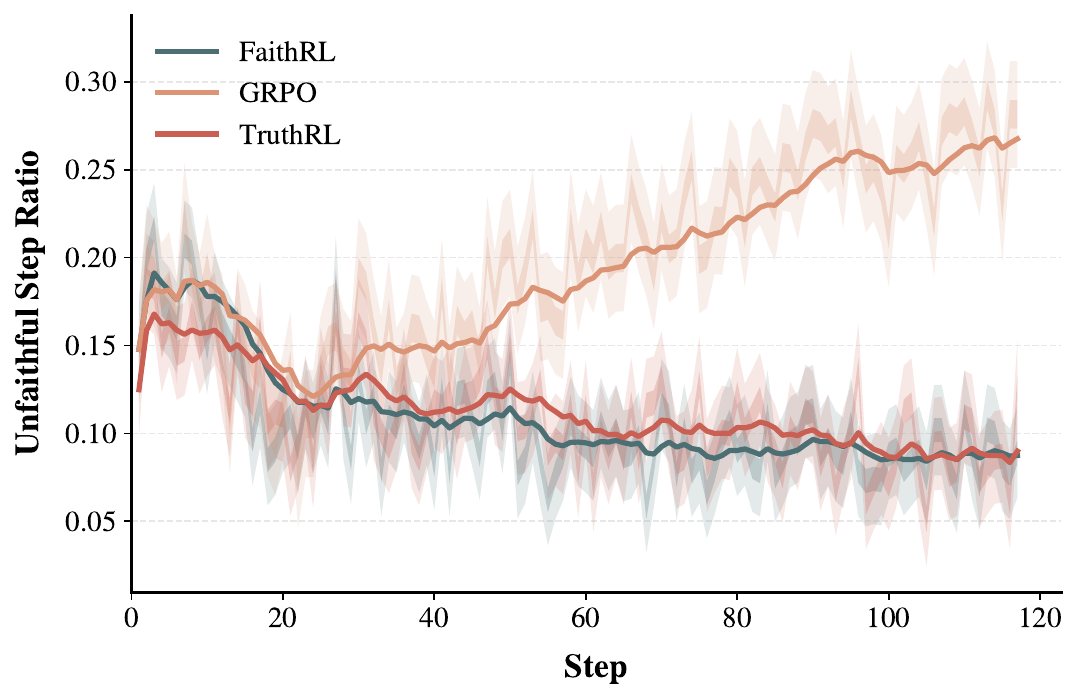}
        \caption{Faithful Step Ratio (Negative)}
        \label{fig:faithful_neg}
    \end{subfigure}
    \hfill
    \begin{subfigure}[b]{0.48\textwidth}
        \centering
        \includegraphics[width=\linewidth]{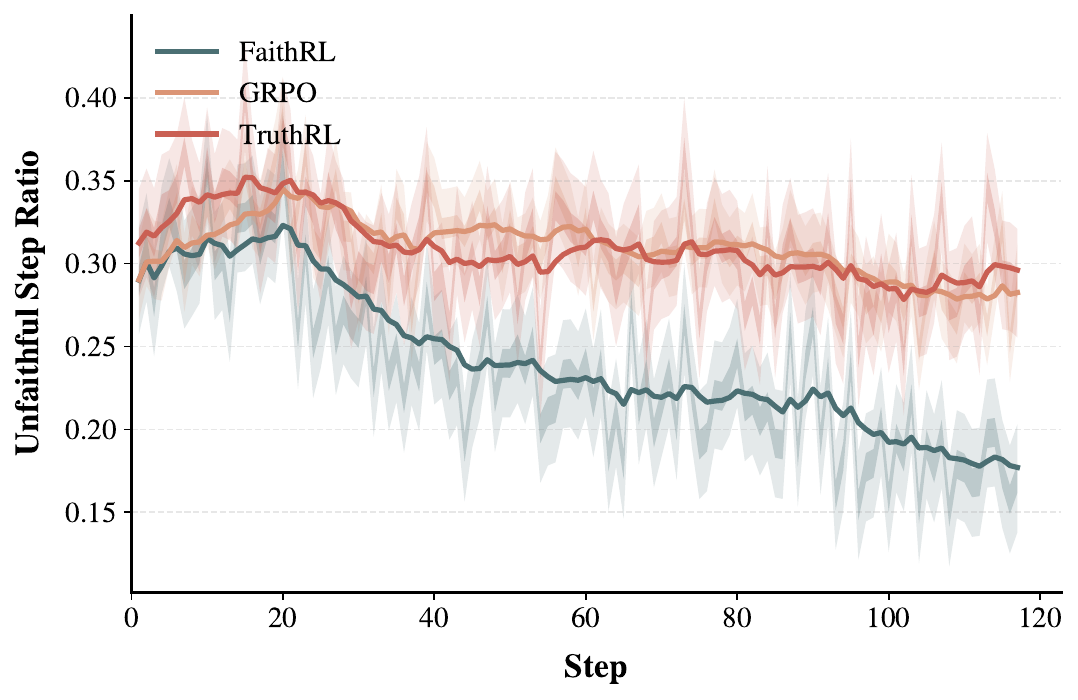}
        \caption{Unfaithful Step Ratio (Negative)}
        \label{fig:unfaithful_neg}
    \end{subfigure}
    
    \caption{Analysis of Faithful and Unfaithful Step Ratios on positive and negative samples. TruthRL, GRPO, and FaithRL are compared across different steps.}
    \label{fig:all_ratios}
\end{figure*}

\subsection{Extended Results on OOD Generalization}
\label{subsec:detailed_ood}
We also provide a comprehensive breakdown of the OOD generalization experiments on GSM8k and MATH500 datasets. Table \ref{tab:detailed_ood} details the performance metrics and step-wise reasoning validity for both FaithRL and the GRPO baseline.
\begin{table}[h]
    \centering
    \caption{Detailed Performance and Faithfulness Analysis on OOD Datasets. The table presents outcome metrics (Correct Rate, Miss Rate, Hallucination Rate) and process-level metrics (Step Validity across different outcome types). Values in parentheses indicate the raw counts (numerator/denominator).}
    \label{tab:detailed_ood}
    \resizebox{\textwidth}{!}{
    \begin{tabular}{llccccccc}
        \toprule
        & & \multicolumn{3}{c}{\textbf{Outcome Metrics}} & \multicolumn{4}{c}{\textbf{Process Validity}} \\
        \cmidrule(lr){3-5} \cmidrule(lr){6-9}
        \textbf{Dataset} & \textbf{Method} & 
        \textbf{Correctness} & \textbf{Miss} & \textbf{Hallucination} & 
        \textbf{Faith. (Correct)} & \textbf{Faith. (Miss)} & \textbf{Faith. (Hallu.)} & 
        \textbf{Overall Faith.} \\ 
        \midrule
        \multirow{4}{*}{\textbf{GSM8k}} 
        & \textbf{FaithRL} & \textbf{86.9\%} & 0.9\% & \textbf{12.2\%} & \textbf{79.5\%} & 21.7\% & 42.6\% & \textbf{74.1\%} \\
        & & (1146/1319) & (12/1319) & (161/1319) & (3303/4156) & (10/46) & (271/636) & (3584/4838) \\
        \cmidrule{2-9}
        & GRPO & 72.3\% & 1.2\% & 26.5\% & 72.7\% & 35.3\% & 55.4\% & 67.5\% \\
        & & (954/1319) & (16/1319) & (349/1319) & (2994/4120) & (30/85) & (843/1523) & (3867/5728) \\
        \midrule
        \multirow{4}{*}{\textbf{MATH500}} 
        & \textbf{FaithRL} & \textbf{53.6\%} & 1.6\% & \textbf{44.8\%} & \textbf{64.2\%} & 19.5\% & 41.0\% & \textbf{52.3\%} \\
        & & (268/500) & (8/500) & (224/500) & (683/1064) & (8/41) & (407/993) & (1098/2098) \\
        \cmidrule{2-9}
        & GRPO & 46.6\% & 6.2\% & 47.2\% & 55.0\% & 17.9\% & 41.0\% & 45.2\% \\
        & & (233/500) & (31/500) & (236/500) & (765/1391) & (48/268) & (628/1532) & (1441/3191) \\
        \bottomrule
    \end{tabular}
    }
\end{table}

\subsection{Extended Results on Hyperparameter Sensitivity}
\label{subsec:detailed_ablation_alpha}
We also provide a comprehensive breakdown of the hyperparameter sensitivity analysis regarding $\alpha$ on the Qwen-2.5-7B-Instruct model. Table \ref{tab:detailed_ablation_alpha} details the performance metrics across all three multi-hop QA benchmarks as $\alpha$ varies from 0 to 1, illustrating the impact of the trade-off between standard rewards and faithfulness constraints.

\begin{table*}[!t]
\centering
\caption{Hyperparameter sensitivity analysis of $\alpha$ on Qwen2.5-7B-Instruct. $\alpha$ balances the trade-off between standard reinforcement learning rewards and our proposed faithfulness constraints. Note that $\alpha=1.0$ is equivalent to the vanilla RL method with our geometric reward. The chosen setting for our main experiments is highlighted in gray.}
\label{tab:detailed_ablation_alpha}
\vspace{-0.5em}

\scriptsize 
\setlength{\tabcolsep}{1pt}

\begin{tabularx}{\textwidth}{
    l 
    @{\hskip 4pt} 
    YYYY 
    @{\hskip 4pt} 
    YYYY 
    @{\hskip 4pt} 
    YYYY 
    @{\hskip 4pt} 
    YY    
}
\toprule
& \multicolumn{4}{c}{\textbf{2WikiMultihopQA-Full}} 
& \multicolumn{4}{c}{\textbf{HotpotQA-Full}} 
& \multicolumn{4}{c}{\textbf{MuSiQue-Full}} 
& \multicolumn{2}{c}{\textbf{Average}} \\ 

\cmidrule(lr){2-5} \cmidrule(lr){6-9} \cmidrule(lr){10-13} \cmidrule(lr){14-15}

& \multicolumn{2}{c}{Ans.} & \multicolumn{2}{c}{Unans.} 
& \multicolumn{2}{c}{Ans.} & \multicolumn{2}{c}{Unans.} 
& \multicolumn{2}{c}{Ans.} & \multicolumn{2}{c}{Unans.} 
& \multirow{2}{*}{\textbf{A}} & \multirow{2}{*}{\textbf{H}} \\ 

\cmidrule(lr){2-3} \cmidrule(lr){4-5} 
\cmidrule(lr){6-7} \cmidrule(lr){8-9} 
\cmidrule(lr){10-11} \cmidrule(lr){12-13} 

{\bf Setting} 
& \textbf{A} & \textbf{H} & \textbf{A} & \textbf{H} 
& \textbf{A} & \textbf{H} & \textbf{A} & \textbf{H} 
& \textbf{A} & \textbf{H} & \textbf{A} & \textbf{H} 
& & \\ 

\midrule
Prompting & 50.4 & 18.8 & 92.6 & 7.4 & 66.2 & 17.7 & 79.2 & 20.8 & 31.2 & 19.7 & 87.2 & 12.8 & 67.8 & 16.2 \\ 
\midrule
\rowcolor{gray!15} $\alpha=0$ (Ours) & 78.0 & 8.6 & 92.7 & 7.3 & 79.8 & 8.3 & 92.4 & 7.6 & 54.2 & 9.1 & 97.2 & 2.8 & 82.4 & 7.3 \\

$\alpha=0.25$ & 76.8 & 8.9 & 90.5 & 9.5 & 81.6 & 8.1 & 90.3 & 9.7 & 51.0 & 9.5 & 96.7 & 3.3 & 81.1 & 8.2 \\

$\alpha=0.5$ & 74.5 & 7.7 & 94.3 & 5.7 & 78.2 & 7.8 & 89.6 & 10.4 & 50.0 & 9.3 & 95.8 & 4.2 & 80.4 & 7.5 \\

$\alpha=0.75$ & 74.2 & 8.0 & 93.3 & 6.7 & 80.5 & 7.5 & 88.5 & 11.5 & 44.0 & 9.6 & 96.4 & 3.6 & 79.5 & 7.8 \\

$\alpha=1.0$ (Vanilla) & 73.2 & 6.8 & 93.2 & 6.8 & 82.0 & 7.5 & 85.6 & 14.4 & 50.7 & 12.2 & 94.5 & 5.5 & 79.8 & 8.8 \\

\bottomrule
\end{tabularx}
\vspace{-1em}
\end{table*}

\subsection{Extended Results on Component Analysis}
\label{subsec:detailed_ablation_component}
We utilize the Qwen-2.5-7B-Instruct model as the backbone for these ablation studies. Table~\ref{tab:detailed_ablation_component} details the performance metrics across all three multi-hop QA benchmarks, illustrating the contribution of each module to the overall performance.

\begin{table*}[!t]
\centering
\caption{Detailed ablation results on component effectiveness.}
\label{tab:detailed_ablation_component}
\vspace{-0.5em}

\scriptsize 
\setlength{\tabcolsep}{1pt}

\begin{tabularx}{\textwidth}{
    l 
    @{\hskip 4pt} 
    YYYY 
    @{\hskip 4pt} 
    YYYY 
    @{\hskip 4pt} 
    YYYY 
    @{\hskip 4pt} 
    YY    
}
\toprule
& \multicolumn{4}{c}{\textbf{2WikiMultihopQA-Full}} 
& \multicolumn{4}{c}{\textbf{HotpotQA-Full}} 
& \multicolumn{4}{c}{\textbf{MuSiQue-Full}} 
& \multicolumn{2}{c}{\textbf{Average}} \\ 

\cmidrule(lr){2-5} \cmidrule(lr){6-9} \cmidrule(lr){10-13} \cmidrule(lr){14-15}

& \multicolumn{2}{c}{Ans.} & \multicolumn{2}{c}{Unans.} 
& \multicolumn{2}{c}{Ans.} & \multicolumn{2}{c}{Unans.} 
& \multicolumn{2}{c}{Ans.} & \multicolumn{2}{c}{Unans.} 
& \multirow{2}{*}{\textbf{A}} & \multirow{2}{*}{\textbf{H}} \\ 

\cmidrule(lr){2-3} \cmidrule(lr){4-5} 
\cmidrule(lr){6-7} \cmidrule(lr){8-9} 
\cmidrule(lr){10-11} \cmidrule(lr){12-13} 

{\bf Setting} 
& \textbf{A} & \textbf{H} & \textbf{A} & \textbf{H} 
& \textbf{A} & \textbf{H} & \textbf{A} & \textbf{H} 
& \textbf{A} & \textbf{H} & \textbf{A} & \textbf{H} 
& & \\ 

\midrule
Prompting & 50.4 & 18.8 & 92.6 & 7.4 & 66.2 & 17.7 & 79.2 & 20.8 & 31.2 & 19.7 & 87.2 & 12.8 & 67.8 & 16.2 \\ 
\midrule

GRPO & 82.0 & 14.1 & 87.9 & 12.1 & 86.8 & 11.1 & 81.6 & 18.4 & 65.9 & 23.5 & 92.9 & 7.1 & 82.8 & 14.4 \\

+$\mathcal{R}_{geo}$ & 73.2 & 6.8 & 93.2 & 6.8 & 82.0 & 7.5 & 85.6 & 14.4 & 50.7 & 12.2 & 94.5 & 5.5 & 79.8 & 8.8 \\

+FAAM & 78.2 & 8.5 & 88.1 & 11.9 & 83.2 & 7.1 & 73.1 & 26.9 & 52.4 & 9.5 & 91.7 & 8.3 & 77.8 & 12.0 \\

\rowcolor{gray!15} FaithRL & 78.0 & 8.6 & 92.7 & 7.3 & 79.8 & 8.3 & 92.4 & 7.6 & 54.2 & 9.1 & 97.2 & 2.8 & 82.4 & 7.3 \\

\bottomrule
\end{tabularx}
\vspace{-1em}
\end{table*}

\section{Experimental Computational Cost}
\label{app:comp_cost}

\paragraph{Experimental Setup Details.}
To ensure a rigorous and fair evaluation, we tracked resource consumption across the full training pipeline.
For \textbf{FLOPs}, we recorded the `flops/total`, representing the cumulative floating-point operations since the start of training. This metric encompasses all major stages: actor/critic updates, log-probability/value forward passes, vLLM rollout generation, and server-side LLM-judge computations (estimated via API usage tokens).
For \textbf{GPU Hours}, we calculated the total usage of the 4 NVIDIA H200 GPUs dedicated to model training and gradient updates. Additionally, to account for the overhead of the external reward model, we included the consumption of the 2 H200 GPUs hosting the LLM server, scaled by their average Streaming Multiprocessor (SM) Utilization. This approach ensures that the reported cost reflects the actual hardware burden rather than just wall-clock time.

\paragraph{Extended Results on GPU Hours.}
We provide a detailed breakdown of the computational cost comparison between GRPO and FaithRL. The total GPU hours are composed of two parts: the time spent on model parameter updates (using 4 GPUs) and the equivalent time cost for the LLM Judge server.

Table~\ref{tab:gpu_hours_breakdown} presents the detailed calculation for the total GPU hours consumed by both methods.

\begin{table}[htbp]
    \centering
    \caption{Breakdown of total GPU hours calculation.}
    \label{tab:gpu_hours_breakdown}
    \begin{tabular}{lcccc}
        \toprule
        \textbf{Method} & \textbf{Update Time (h)} & \textbf{Judge Server Cost (h)} & \textbf{Total GPU Hours} & \textbf{Increase} \\
        \midrule
        GRPO & $4 \times 22.4 = 89.6$ & $22.4 \times 22.37\% \times 2 \approx 10.0$ & 99.6 & - \\
        FaithRL & $4 \times 24.8 = 99.2$ & $24.8 \times 28.17\% \times 2 \approx 14.0$ & 113.2 & +13.7\% \\
        \bottomrule
    \end{tabular}
\end{table}

The "Update Time" refers to the wall-clock time for training on 4 GPUs. The "Judge Server Cost" is calculated by converting the active judge server usage (percentage of training time $\times$ number of judge GPUs, here assumed to be 2) into equivalent GPU hours. FaithRL incurs slightly higher costs due to the additional verification steps, but the overall increase remains modest.

\paragraph{Theoretical analysis of efficiency.}
To explain why FaithRL maintains high efficiency despite increased verification frequency, we formalize the total training time per iteration $T_{total}$. We define the following notation: $G$ denotes the number of rollouts per prompt; $N$ represents the average number of reasoning steps (empirically $N \approx 3.5$); $T_{gen}$ and $T_{update}$ represent the time consumed by the policy model for generation and parameter updates, respectively; and $T_{req}$ denotes the latency for interactions with the Judge Server.

The cost models for the baseline and our method are formulated as follows:
\begin{itemize}
    \item \textbf{GRPO (baseline):} Performs a single outcome verification per sample.
    \[ T_{GRPO} \approx T_{gen} + G \times T_{req}^{outcome} + T_{update} \]
    \item \textbf{FaithRL (ours):} Performs one outcome verification plus $N$ step verifications per sample.
    \[ T_{FaithRL} \approx T_{gen} + G \times (T_{req}^{outcome} + T_{overhead}^{steps}) + T_{update} \]
\end{itemize}

While the verification term appears to scale by $N$, two critical factors mitigate the actual wall-clock impact, keeping the overhead below 15\%:

\begin{enumerate}
    \item \textbf{KV-cache reuse (context caching):} This is the dominant factor. In GRPO, the outcome judge processes the full context (document + question + reasoning chain), incurring a costly prefill (context encoding) operation. In FaithRL, the outcome judge and step judges share the identical document context. Once the server (e.g., vLLM) processes the initial request, the Key-Value (KV) cache for the context is resident in VRAM. Consequently, subsequent step verifications primarily involve the much faster decoding phase, bypassing the expensive prefill. Thus, the marginal cost of a step verification is significantly lower than an outcome verification ($T_{req}^{step} < T_{req}^{outcome}$).
    
    \item \textbf{Server concurrency and utilization:} Our Judge Server is configured with a high concurrency limit (\texttt{MAX\_NUM\_SEQS=64}). Empirical monitoring reveals that the GPU utilization on the server side remains relatively low (approximately 20\%--30\%) for both GRPO and FaithRL. This indicates that the server is not compute-bound and has ample capacity. Therefore, increasing the request frequency from 1 (outcome only) to $1+N$ (outcome + steps) does not introduce a linear latency penalty. The server can efficiently process these additional, lightweight step-verification requests concurrently without significant queuing delay, resulting in a sub-linear increase in total verification time ($T_{overhead}^{steps} < N \times T_{req}^{step\_isolated}$).
\end{enumerate}

In conclusion, due to efficient caching and the server's capacity to handle increased concurrency without saturation, the additional computational overhead of FaithRL remains marginal.

\section{Case Study}
\label{app:case_study}

To provide concrete insights into the behavioral differences between FaithRL and GRPO, we present two representative cases from the HotpotQA-Full dataset. These examples illustrate how our method enhances both reasoning capability and faithfulness compared to outcome-driven baselines.

\subsection{Analysis of Answerable Scenarios}
In the answerable case (Table \ref{tab:case_answerable}), the model is required to perform multi-hop reasoning: identifying a specific compilation album, finding a track on it, and verifying its sampling source.
\begin{itemize}
    \item \textbf{FaithRL (Ours):} Our model successfully synthesizes information from multiple documents. It correctly identifies that ``Fantastic Voyage: The Greatest Hits'' contains tracks from ``Gangsta's Paradise'', and links this to the sampling information of Stevie Wonder. The reasoning chain is logically sound and leads to the correct answer.
    \item \textbf{GRPO (Baseline):} The baseline model fails to make the necessary logical connection. Although it retrieves relevant information about ``Gangsta's Paradise'', it fails to link the compilation album context correctly, leading to an unnecessary refusal (``I don't know''). This highlights the issue of over-conservativeness or reasoning failure in standard RL approaches when faced with complex evidence integration.
\end{itemize}

\subsection{Analysis of Unanswerable Scenarios}
In the unanswerable case (Table \ref{tab:case_unanswerable}), the model must identify a specific location of an uprising. While the general context (Haitian Revolution) is present, the specific location ``Saint-Domingue'' is missing from the provided evidence.
\begin{itemize}
    \item \textbf{FaithRL (Ours):} Our model demonstrates superior faithfulness. It correctly identifies the event and the general context (Haiti) but crucialy notes that ``the documents do not provide a specific location within Haiti''. Consequently, it refuses to answer, adhering strictly to the provided knowledge.
    \item \textbf{GRPO (Baseline):} The baseline model succumbs to hallucination or internal knowledge bias. It attempts to answer with ``Haiti'', which is technically the resulting state but not the specific answer required by the ground truth context (which was pruned). This illustrates the tendency of standard RL to ``guess'' based on partial associations, violating the faithfulness constraint.
\end{itemize}

\subsection{Reasoning Faithfulness and Step-wise Contribution}
Beyond correctness, we observe a significant disparity in reasoning efficiency. As demonstrated in the examples, FaithRL typically resolves complex multi-hop queries within approximately 3 concise steps. In contrast, the GRPO-trained model often requires more than double the number of steps to arrive at a conclusion. Crucially, regarding reasoning faithfulness, every step generated by FaithRL validly contributes to the final derivation, achieving a 100\% valid step ratio in these cases. Conversely, the GRPO baseline exhibits a high proportion of redundant or irrelevant reasoning steps, diluting the logical density and indicating a reliance on search-based trial-and-error rather than systematic deduction.

\begin{table}[h]
    \centering
    \small
    \begin{tabular}{p{0.95\textwidth}}
        \hline
        \textbf{Question:} What song by the American rapper Coolio is on his greatest hits album ``Fantiastic Voyage'' and samples from a Stevie Wonder song? \\
        \hline
        \textbf{FaithRL (Ours) Output:} \\
        1. First, identify the greatest hits album by Coolio that is mentioned. From Document 'Fantastic Voyage: The Greatest Hits', we learn "Fantastic Voyage: The Greatest Hits is a compilation album by rapper Coolio, released in 2001. It includes the track 'Aw Here it Goes' he contributed as a main theme to TV series 'Kenan \& Kel'. Other songs originally appear on his first three albums, 'It Takes a Thief', 'Gangsta's Paradise' and 'My Soul'. " This sentence mentions that the album is "Fantastic Voyage" and that it includes tracks from Coolio's other albums. However, it does not directly state which song samples a Stevie Wonder song. \\
        2. Next, identify a song by Coolio that samples a Stevie Wonder song. From Document 'Gangsta's Paradise', we learn "'Gangsta's Paradise" is a song by American rapper Coolio, featuring singer L.V. The song was released on Coolio's album of the same name, as well as the soundtrack for the 1995 film "Dangerous Minds". It samples the chorus and instrumentation of Stevie Wonder's 1976 song "Pastime Paradise". " This sentence states that "Gangsta's Paradise" by Coolio samples a Stevie Wonder song. Therefore, "Gangsta's Paradise" is the song by Coolio that samples a Stevie Wonder song. \\
        3. Verify if "Gangsta's Paradise" is on the greatest hits album "Fantastic Voyage". From Document 'Fantastic Voyage: The Greatest Hits', we learn "Fantastic Voyage: The Greatest Hits is a compilation album by rapper Coolio, released in 2001. It includes the track 'Aw Here it Goes' he contributed as a main theme to TV series 'Kenan \& Kel'. Other songs originally appear on his first three albums, 'It Takes a Thief', 'Gangsta's Paradise' and 'My Soul'. " This sentence mentions that "Gangsta's Paradise" is one of the songs originally appearing on his first three albums, implying it is on the greatest hits album "Fantastic Voyage".\\
        \textbf{Answer:} Gangsta's Paradise \quad \textbf{(Reward: $y_0$)} \\
        \hline
        \textbf{GRPO (Baseline) Output:} \\
        1. First, we need to identify a song by Coolio that is on his greatest hits album "Fantastic Voyage" and samples from a Stevie Wonder song. From the document "Fantastic Voyage: The Greatest Hits", we learn that this album by Coolio includes the track "Aw Here it Goes" which he contributed as a main theme to the TV series "Kenan \& Kel". \\ 
        2. Next, we need to find out if this song or any other song by Coolio samples from a Stevie Wonder song. The document "Gangsta's Paradise" states that "Gangsta's Paradise" by Coolio samples the chorus and instrumentation of Stevie Wonder's 1976 song "Pastime Paradise". \\ 
        3. Given that "Gangsta's Paradise" is a Coolio song that samples a Stevie Wonder song, and it is also mentioned that "Gangsta's Paradise" is featured on the album of the same name, which is not "Fantastic Voyage", we need to find another Coolio song on "Fantastic Voyage" that samples a Stevie Wonder song. \\ 
        4. The document "Fantastic Voyage (Coolio song)" tells us that "Fantastic Voyage" by Coolio samples "Fantastic Voyage" by Lakeside, and it is on his debut album "It Takes a Thief". \\
        5. The document "Curtain Falls" mentions that the background music of "Curtain Falls" is almost identical to that of Coolio's 1995 hit, "Gangsta's Paradise", as they both share usage of a sample of Stevie Wonder's "Pastime Paradise". \\
        6. Given the information, it is not explicitly stated that a song on "Fantastic Voyage" samples a Stevie Wonder song, but "Gangsta's Paradise" does. \\
        \textbf{Answer:} I don't know \quad \textbf{(Reward: 0)} \\
        \hline
    \end{tabular}
    \caption{Comparison on an Answerable Case. FaithRL correctly bridges the evidence, whereas GRPO fails to link the compilation album to the original track.}
    \label{tab:case_answerable}
\end{table}

\begin{table}[h]
    \centering
    \small
    \begin{tabular}{p{0.95\textwidth}}
        \hline
        \textbf{Question:} Where did Henri Christophe and other slaves hold an uprising from 1791 to 1804 that led to the founding of a state which was both free from slavery and ruled by non-whites and former captives? \\
        \hline
        \textbf{FaithRL (Ours) Output:} \\
        1. First, identify the state founded by the uprising. From Document 'State of Haiti', we learn "The State of Haiti (French: État d'Haïti, Haitian: Leta an Ayiti) was the name of the state in northern Haiti. It was created on October 17, 1806 following the overthrow of the Empire of Haiti following the assassination of Emperor Jacques I. The northern State of Haiti was ruled by Henri Christophe originally as 'Provisional Chief of the Haitian Government' from October 17, 1806 until February 17, 1807 when he became 'President of the State of Haiti'. The 1807 constitution for the State of Haiti made the post of president a position for life with the president having the power to appoint his successor. On March 28, 1811 President Henri was proclaimed King Henry I, thereby dissolving the State of Haiti and creating the Kingdom of Haiti." This document mentions the State of Haiti but does not directly state it was free from slavery and ruled by non-whites and former captives. However, it does refer to the Haitian Revolution, which is mentioned in other documents. Document 'Jean-Baptiste Sans Souci' states "Jean-Baptiste Sans-Souci was a leader of rebel slaves during the Haitian Revolution. He was assassinated by rival black rebel leader, Henri Christophe, in 1803, shortly before Haiti won its independence." This suggests the Haitian Revolution led to the founding of Haiti, which was free from slavery and ruled by non-whites and former captives. Document 'Henri Christophe' confirms Henri Christophe was a "key leader in the Haitian Revolution, which succeeded in gaining independence from France in 1804." Document 'State of Haiti' also mentions the Haitian Revolution in relation to the founding of the State of Haiti. Therefore, the Haitian Revolution is the uprising referred to in the question.\\
        2. The question asks where Henri Christophe and other slaves held an uprising. From Document 'Jean-Baptiste Sans Souci', we learn "Jean-Baptiste Sans-Souci was a leader of rebel slaves during the Haitian Revolution. He was assassinated by rival black rebel leader, Henri Christophe, in 1803, shortly before Haiti won its independence." This suggests the uprising took place in Haiti. However, the document does not specify a particular location within Haiti. Given the information, it is clear the uprising occurred in Haiti, but the exact location is not specified.\\
        3. The question asks for the location of the uprising, but the documents do not provide a specific location within Haiti. Therefore, I do not have enough information to pinpoint the exact location of the uprising.\\
        \textbf{Answer:} I don't know \quad \textbf{(Reward: $y_0$)} \\
        \hline
        \textbf{GRPO (Baseline) Output:} \\
        1. First, we need to identify the region where Henri Christophe and other slaves held an uprising from 1791 to 1804. From the document "Henri Christophe", we learn that Henri Christophe was a key leader during the Haitian slave rebellion (1791–1804).\\
        2. The Haitian Revolution is the uprising we are looking for. Next, we need to find out if the state that was founded was free from slavery and ruled by non-whites and former captives.\\
        3. The document "Jean-Baptiste Sans Souci" mentions that Jean-Baptiste Sans-Souci was a leader of rebel slaves during the Haitian Revolution, and that Haiti won its independence from France in 1804.\\
        4. The document "State of Haiti" provides more information about the State of Haiti, which was created in northern Haiti after the overthrow of the Empire of Haiti following the assassination of Emperor Jacques I. It was ruled by Henri Christophe.\\
        5. The document "Kingdom of Haiti" confirms that Henri Christophe established the Kingdom of Haiti in 1811, which was in the north of Haiti.\\
        6. Given that Henri Christophe was a former slave and a key leader in the Haitian Revolution, and that the documents mention Haiti gaining independence and becoming a state ruled by non-whites (former slaves), we can conclude that the state in question is Haiti. \\
        \textbf{Answer:} Haiti \quad \textbf{(Reward: $-x_0$)} \\
        \hline
    \end{tabular}
    \caption{Comparison on an Unanswerable Case. FaithRL recognizes the missing specific evidence and refuses, while GRPO hallucinates/guesses a related entity.}
    \label{tab:case_unanswerable}
\end{table}

\section{Prompt Template}
\label{app:prompt_template}

We provide the comprehensive prompt templates utilized throughout our framework: Table~\ref{tab:evidence_prompt} details the transformation of question decompositions into declarative evidence; Table~\ref{tab:reasoning_prompt} defines the instruction for the reasoning policy; 
Table~\ref{tab:faithfulness_prompt} and Table~\ref{tab:ood_outcome_prompt} outline the step-level faithfulness verification protocol for multihop reasoning tasks and OOD math tasks, respectively;
Table~\ref{tab:outcome_prompt} specifies the criteria for judging final answer correctness .

\begin{table*}[!ht]
\caption{Prompt for converting question decomposition into declarative evidence statements.}
\centering
\begin{prompt}[title=Evidence Construction Prompt]
{\bf System Input:} \\
You are an expert at turning subquestion-answer pairs into one clear, natural English statement.

Instructions:
1. Produce ONE grammatically correct declarative sentence using the given subquestion and its answer.
2. Preserve the semantic relationship between entities.
3. If the subquestion contains references like \#1, \#2, etc., resolve them using the list of prior answers (1-indexed).
4. Return ONLY the resulting sentence, nothing else.

Examples:
Q: who wrote crazy little thing called love original artist | A: Freddie Mercury
$\rightarrow$ Freddie Mercury wrote Crazy Little Thing Called Love.

Q: In what year did \#1 die? | A: 1991 | prior: ['Freddie Mercury']
$\rightarrow$ Freddie Mercury died in 1991.

{\bf User Input:} \\
Convert the following pair into a single statement.

Subquestion: \{question\}

Answer: \{answer\}

Prior answers (for \# references): \{prior\_answers\}
\end{prompt}
\label{tab:evidence_prompt}
\end{table*}

\begin{table*}[!ht]
\caption{Prompt used for generating reasoning chains (Reasoning Policy).}
\centering
\begin{prompt}[title=Reasoning Prompt]
{\bf System Prompt:} \\
You are a helpful assistant. You are given a Question and References. The references may or may not help answer the question. Your task is to answer the question based on factual information in the references or your own knowledge. If, after examining and reasoning over the References, you are uncertain or don't know the answer, output I don't know as specified below.

\textbf{CRITICAL - You MUST follow this EXACT format:} \\
$<$think$>$ Your thinking process$<$/think$>$ \\
$<$answer$>$Your final answer$<$/answer$>$

\textbf{Rules (STRICTLY ENFORCED):}
1. Put reasoning in $<$think$><$/think$>$ tags
2. NEVER start with anything other than $<$think$>$ or $<$answer$>$
3. The $<$answer$>$ tag MUST contain your final answer or "I don't know"

{\bf User Prompt:} \\
\textbf{References:} \\
\{documents\_context\}

\textbf{Question:} \\
\{question\}
\end{prompt}
\label{tab:reasoning_prompt}
\end{table*}

\begin{table*}[!ht]
\caption{Prompt for the LLM-based Faithfulness Verifier $\mathcal{V}$ (Step-level).}
\centering
\begin{prompt}[title=FaithfulnessVerification Prompt]
{\bf System Input:} \\
You are a strict reasoning consistency judge. Decide if the reasoning segment is FULLY SUPPORTED by the provided evidences.

Rules:
1) Output only one digit: 1 if the segment contains meaningful reasoning AND is strictly supported by the evidences; 0 otherwise.
2) Give 0 if the segment adds NO new information, is just a plan/re-statement, or lacks specific details (e.g., 'We should review the list...').
3) Give 1 ONLY if the segment's key assertion semantically matches or is directly inferred from an evidence.
4) Base the decision strictly on the evidences; ignore world knowledge.
5) Do not provide explanations.

{\bf User Input:} \\
Evidences: \\
\{evd\_text\}

Reasoning Segment: \\
\{seg\_text\}

Output (only 0 or 1):
\end{prompt}
\label{tab:faithfulness_prompt}
\end{table*}

\begin{table*}[!ht]
\caption{Prompt for the LLM-based Faithfulness Verifier $\mathcal{V}$ (OOD tasks).}
\centering
\begin{prompt}[title=Faithfulness Verifier Prompt]
{\bf Input:} \\

You are a strict reasoning consistency judge. Decide if the reasoning segment is FULLY SUPPORTED by the provided evidences.\\
Rules:\\
1) Output only one digit: 1 if the segment contains meaningful reasoning AND is strictly supported by the evidences; 0 otherwise.\\
2) Give 0 if the segment adds NO new information, is just a plan/re-statement, or lacks specific details (e.g., 'We should review the list...').\\
3) Give 1 ONLY if the segment's key assertion semantically matches or is directly inferred from an evidence.\\
4) Base the decision strictly on the evidences; ignore world knowledge.\\
5) Do not provide explanations.\\
Output ONLY one digit: -1 or 1. No text, no explanation. \\
You should make the judgment based on provided examples.

{\bf User Input:} \\
Evidences: \{evidence\} \\
Reasoning Segment: \{reasoning\_segment\} \\
Output (only 0 or 1):
\end{prompt}
\label{tab:ood_outcome_prompt}
\end{table*}

\begin{table*}[!ht]
\caption{Prompt for the Outcome Correctness Judge.}
\centering
\begin{prompt}[title=Outcome Judge Prompt]
{\bf Input:} \\
Assume you are a human expert in grading predictions given by a model. You are given a question and a model prediction. Judge if the prediction matches the ground truth answer by following these steps:

1: Take it as granted that the Ground Truth is always correct. \\
2: If the Prediction exactly matches the Ground Truth, ``score" is 1. \\
3: If the Prediction does not exactly match the Ground Truth, go through the following steps and likely give a score as -1. \\
4: If the Ground Truth is a number, ``score" is 1 if and only if the Prediction gives a number that almost exactly matches the ground truth. \\
5: If the Prediction is self-contradictory, ``score" must be -1. \\
6: If the prediction is not answering the question, ``score" must be -1. \\
7: If the prediction is a concise and correct summary of the ground truth, ``score" is 1. \\
8: If ground truth contains a set of items, prediction must contain exactly same items for the score to be 1. \\
9: Otherwise, ``score" is -1.

Output ONLY one digit: -1 or 1. No text, no explanation. \\
You should make the judgment based on provided examples.

\textbf{Examples:} \\
Question: When did the director of film Lord Richard In The Pantry die? \\
Ground Truth: 7 January 1984 \\
Prediction: January 7, 1984 \\
Output: 1

Question: Who is older, Charles Badham or Médéric De Vasselot De Régné? \\
Ground Truth: Charles Badham \\
Prediction: Médéric De Vasselot De Régné \\
Output: -1

{\bf User Input:} \\
Question: \{question\} \\
Ground Truth: \{ground\_truth\_answer\} \\
Prediction: \{predicted\_answer\} \\
Output:
\end{prompt}
\label{tab:outcome_prompt}
\end{table*}

\end{document}